\theoremstyle{plain}
\newtheorem{theorem}{Theorem}
\newtheorem{lemma}{Lemma}
\theoremstyle{definition}
\theoremstyle{remark}
\newtheorem{remark}[theorem]{Remark}
\newcommand{\R}{\mathbb{R}}
\newcommand{\Id}{\mathbf{I}}
\newcommand{\bA}{\mathbf{A}}
\newcommand{\bB}{\mathbf{b}}
\newcommand{\bx}{\mathbf{x}}
\newcommand{\by}{\mathbf{y}}
\newcommand{\br}{\mathbf{r}}
\newcommand{\bk}{\mathbf{k}}
\newcommand{\grad}{\nabla}
\newcommand{\diver}{\nabla\!\cdot}
\newcommand{\curl}{\nabla\times}
\newcommand{\conv}{\ast}
\title{Neural Contrast Expansion for Explainable Structure-Property Prediction and Random Microstructure Design}
\author{
  Guangyu Nie\textsuperscript{1}\quad
  Yang Jiao\textsuperscript{2}\quad
  Yi Ren\textsuperscript{1}\\
  \textsuperscript{1}Mechanical Engineering, Arizona State University\\
  \textsuperscript{2}Materials Science, Arizona State University\\
  Tempe, AZ 85287, USA\\
  \texttt{\{gnie1,yjiao13,yiren\}@asu.edu}
}
\begin{document}

\maketitle

\begin{abstract}
Effective properties of composite materials are defined as the ensemble average of property-specific PDE solutions over the underlying microstructure distributions. Traditionally, predicting such properties can be done by solving PDEs derived from microstructure samples or building data-driven models that directly map microstructure samples to properties. The former has a higher running cost, but provides explainable sensitivity information that may guide material design; the latter could be more cost-effective if the data overhead is amortized, but its learned sensitivities are often less explainable. With a focus on properties governed by linear self-adjoint PDEs (e.g., Laplace, Helmholtz, and Maxwell curl-curl) defined on bi-phase microstructures, we propose a structure-property model that is both cost-effective and explainable. Our method is built on top of the strong contrast expansion (SCE) formalism, which analytically maps $N$-point correlations of an unbounded random field to its effective properties. Since real-world material samples have finite sizes and analytical PDE kernels are not always available, we propose Neural Contrast Expansion (NCE), an SCE-inspired architecture to learn surrogate PDE kernels from structure-property data. For static conduction and electromagnetic wave propagation cases, we show that NCE models reveal accurate and insightful sensitivity information useful for material design. Compared with other PDE kernel learning methods, our method does not require measurements about the PDE solution fields, but rather only requires macroscopic property measurements that are more accessible in material development contexts. 
\end{abstract}

\section{Introduction}
\vspace{-0.1in}
This paper is concerned with \textit{effective properties} of composite materials, which are macroscopic measurements affected by the materials' random microstructure.
We seek a model trained on structure–property data that \textit{explains} what and how microstructural features affect the effective property, providing insights for microstructural design.
Some structure–property relations are intuitive (e.g., the effective conductivity of a composite is dominated by the connectivity of its conductive filler); others have been derived from first principles by materials scientists, e.g., minimizing energy dissipation of light propagation within an optical sensor during wave propagation can be achieved by a hyperuniform arrangement of the random microstructure~\citep{Leseurhyperuniform2016}.
However, for novel materials and target properties, such human-comprehensible knowledge may not be accessible from first principles due to the multiscale and nonlinear nature of the governing equations.

Conventionally, property prediction is achieved by either solving the governing PDEs or building data-driven surrogates: The former requires knowledge about the PDEs and often a significant amount of compute. Although when differentiable, the solution process yields explainable sensitivities; the latter is more cost-effective with amortized data overhead, but generally lacks explainable sensitivities because it is detached from the physics. Structure–property mappings that leverage advantages of both have been developed in the context of effective-medium theorems; among these, we focus on the strong contrast expansion (SCE), which is accurate and explicit for linear PDEs. SCE states that \textit{linear} effective properties of \textit{unbounded} random fields are analytical functions given by spatial convolutions between the PDE kernel (i.e., Green's functions or their Hessians) and the infinite series of $N$-point correlation functions (NPCFs), which characterize random fields at increasing orders and in semantic ways (Fig.~\ref{figure: microstructure_vs_2pcf} and ~\citep{Cheng2024spectra}). For example, connectivity of a contrast phase within a composite material is captured by its 2-point correlation function (Fig.~\ref{figure: microstructure_vs_2pcf}). However, SCE has practical limitations: real samples are finite (introducing boundary effects not captured by SCE), and the relevant kernels may lack closed forms or may not exist.

\vspace{-0.1in}
\paragraph{Method.} To address these limitations, we investigate Neural Contrast Expansion (NCE), a learnable model with an SCE-guided architecture. We encode physics by mapping structure to property as a convolution of NPCFs with a learnable kernel, and estimate that kernel from structure–property data. We use two-dimensional static conduction and electromagnetic wave propagation in bi-phase composites as case studies, whose governing equations (Laplace and Maxwell curl-curl, respectively) have analytic kernels under mild conditions. Numerical results show that NCE generalizes comparably to convolutional neural networks and neural operators, while producing sensitivities that are substantially more explainable and accurate than purely data-driven counterparts. Compared with supervised methods that learn Green's functions or closures~\citep{Boull2022datagreen, PRAVEEN2023green,gin2020deepgreen}, NCE does not require measurements of PDE solution fields; it only needs macroscopic property measurements that are more accessible in practice. Compared with physics-informed methods for learning Green's functions~\citep{hao2024green}, NCE avoids numerical issues introduced by the singularity of Green's functions through a cavity technique inherited from SCE.

\begin{figure}[t]
  \centering
  \includegraphics[width=\columnwidth]{./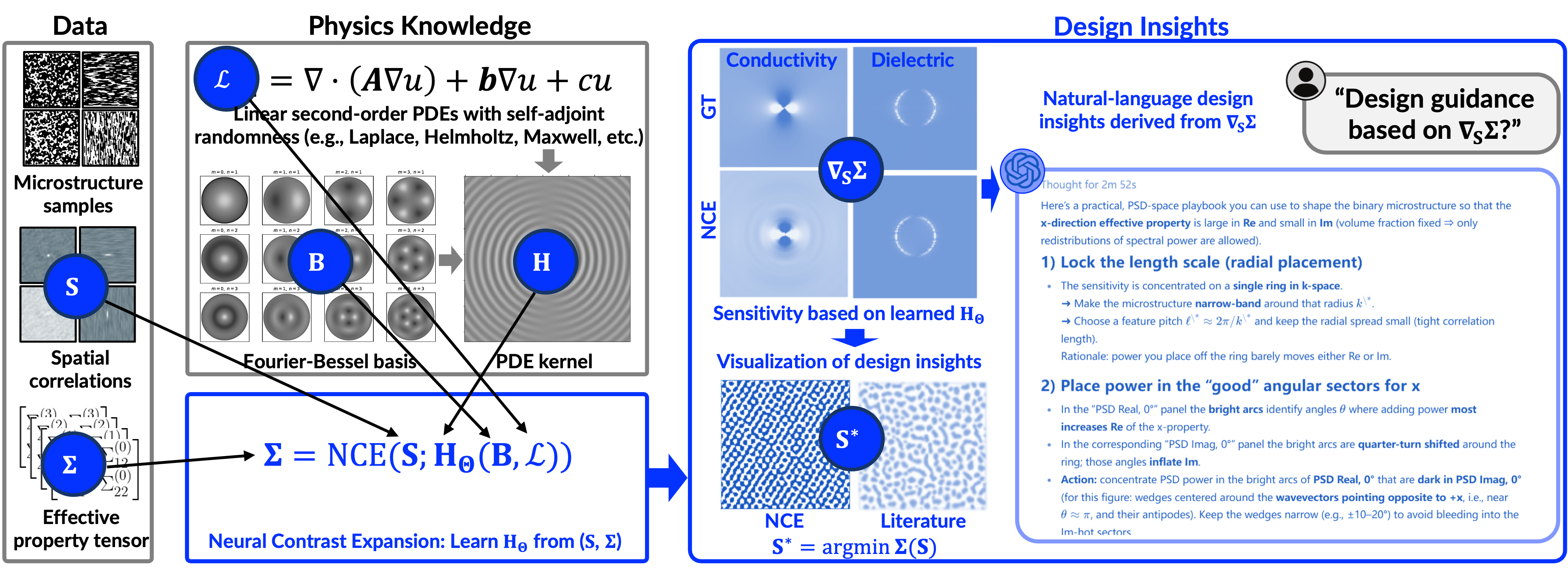}
  \caption{This paper improves the workflow for deriving composite material design insights from measurement data regarding microstructures (``$S$'') and their effective properties (``$\boldsymbol{\Sigma}$''). We propose Neural Contrast Expansion (NCE), which learns PDE kernels (``$\mathbf{H}$'') based on data and physics knowledge (PDE ``$\mathscr{L}$'' and basis ``$\mathbf{B}$'') to provide accurate sensitivity information (``$\nabla_S \boldsymbol{\Sigma}$''). The sensitivity derived from NCE is much more accurate than that of purely data-driven models (see Fig.~\ref{figure: conduction_kernel}), and in turn, provides explainable design insights through LLM and optimal microstructure design (``$S^*$''). For optical material design with minimal energy loss, design insights derived from NCE lead to hyperuniform microstructures, consistent with recent literature~\cite{To18a}. 
  }
   \label{figure: 1}
   \vspace{-0.1in}
\end{figure}

\vspace{-0.1in}
\paragraph{Scope and limitations.} In this work we restrict attention to randomness entering the zeroth- or second-order terms of linear second-order PDEs. These two classes admit a unified SCE because their perturbations appear as self-adjoint modifications to a coercive baseline operator and generate even convolutional kernel structures that close under ensemble averaging.
% This symmetry enables order-by-order expressions for effective properties in terms of spatial correlation functions of the contrast field, and it remains stable even at large pointwise contrast for short-range correlated media. 
In contrast, first-order randomness (random drift/advection) produces skew-adjoint, odd-kernel perturbations that, in general, break this closure: the resulting effective operator depends on nonlocal flow statistics rather than solely on local correlations, and the convergence of SCE is generally not guaranteed (see App.~\ref{sec: first_order_rand} for explanation).
% Such cases are therefore excluded, except in special reducible forms (e.g., gradient drift mapped to a zeroth-order term, small-Péclet divergence-free flows mapped to an effective second-order term).
% Thus NCE balances generality with analytical tractability by focusing on the randomness classes for which SCE is both mathematically robust and physically interpretable. This, in turn, provides a principled route for designing random fields to achieve target effective properties, 
We note that there are niche yet valuable applications of NCE for the focused randomness classes beyond materials science: For climate science, elliptic/parabolic PDEs for heat, moisture, or tracer transport (e.g., diffusion or advection–diffusion) include spatially heterogeneous parameter fields (e.g., diffusivities, reaction rates, subgrid closures). NCE could be used to learn kernels that predict how fast heat, moisture, or pollution spreads and mixes when tiny swirls of air or water cannot be resolved.
For neuroscience, the quasi-static electromagnetic forward problem is modeled by Laplace/Poisson equations with spatially varying and anisotropic tissue conductivities. When solution fields (e.g., MRI) are unavailable, NCE could be used to estimate how electrical signals travel through different parts of the brain using sensor-level measurements.
For cosmology, the gravitational potential obeys the Poisson equation and the matter density is a random field with rich $N$-point correlations. NCE can learn kernels that map these correlations to large-scale effective observables (e.g., summary statistics related to lensing or clustering) without requiring full field solutions. The advantage of NCE against other kernel-learning models with stronger cosmology priors (e.g., effective field theorem~\citep{cabass2022snowwhite} and Halo models~\citep{Asgari2023halo}) is yet to be understood.  

\vspace{-0.1in}
\section{Related Work}\vspace{-0.1in}
\paragraph{Learning Green's functions from field-level data.}
A substantial line of work learns Green's function of linear PDEs directly from source–solution data, using either physics-regularized setups or supervised excitation–response pairs. Representative approaches include multiscale neural networks tailored to the singular/multiscale structure of the Green's function~\citep{hao2024green}, rational-network regression of Green's functions~\citep{Boull2022datagreen}, and low-rank/operator–SVD interpolation of learned kernels~\citep{PRAVEEN2023green}; extensions like DeepGreen address nonlinear BVPs via latent linearization~\citep{gin2020deepgreen}. These methods assume access to dense fields or high-fidelity simulations for training.
% 1. https://arxiv.org/pdf/2410.18439
% 2. https://pmc.ncbi.nlm.nih.gov/articles/PMC8940897/?utm_source=chatgpt.com
% 3. https://www.sciencedirect.com/science/article/pii/S0045782523000944
% 4. https://www.nature.com/articles/s41598-021-00773-x?utm_source=chatgpt.com
\vspace{-0.1in}
\paragraph{Nonlocal closures and operator learning.}
In parallel, closure and operator-learning work fits nonlocal convolution kernels (e.g., eddy-diffusivity ~\citep{PRF2023nonlocal,liu2023eddy}, nonlocal constitutive laws~\citep{Sanderse2024closure}) so a coarse model matches resolved simulations, enforcing physics constraints such as symmetry/reciprocity, decay, and positivity. Examples include systematic constructions of nonlocal eddy-diffusivity operators in fluids and oceanography~\citep{PRF2023nonlocal,liu2023eddy}, and data-driven learning of nonlocal constitutive/transport kernels from high-fidelity simulations~\citep{you2020datadrivennonlocal,cabass2022snowwhite}.
% 5. https://journals-aps-org.ezproxy1.lib.asu.edu/prfluids/pdf/10.1103/PhysRevFluids.8.124501
% 6. https://journals.ametsoc.org/view/journals/phoc/53/8/JPO-D-22-0223.1.xml?utm_source=chatgpt.com
% 7. https://arxiv.org/pdf/2403.02913
% 8. https://par.nsf.gov/servlets/purl/10275545?utm_source=chatgpt.com
% 9. https://www.slac.stanford.edu/econf/C210711/papers/2203.08232.pdf?utm_source=chatgpt.com
\vspace{-0.1in}
\paragraph{Our setting and gap.}
By contrast, many experimental regimes (notably in materials) lack field-level supervision but do provide structure–property datasets and statistics of heterogeneous coefficients. Our NCE targets this regime: We (i) learn the PDE kernel at a chosen coarse resolution from ensemble/effective measurements (rather than field snapshots), and (ii) plug that kernel into SCE to obtain closed-form maps and sensitivities from coefficient correlations to effective properties. 
% Methodologically this aligns with coarse-grained/effective theories (scale-dependent response summarized by kernels) but differs in its data modality (ensemble vs. field) and design loop (use SCE weights to rank and tune correlation spectra)~\citep{Sanderse2024closure}.

\begin{figure}[t]
  \centering
  \includegraphics[width=\columnwidth]{./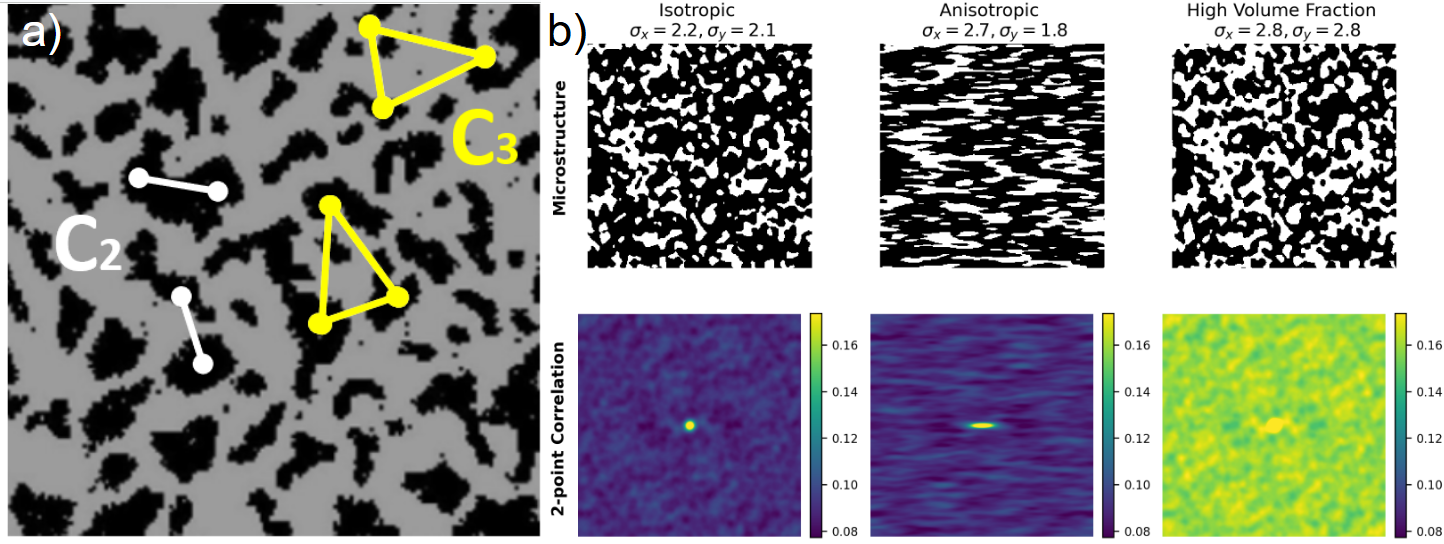}
  \caption{(a) Illustration of two- and three-point configurations ($\mathcal{C}_2$ and $\mathcal{C}_3$) in a binary microstructure. The black and gray regions represent two distinct phases of the medium, while the white line segments and yellow triangles mark representative 2-point and 3-point configurations. (b) Binary microstructures with distinct features: (left) isotropic, (middle) anisotropic with horizontal elongation, and (right) high-volume fraction. The parameters $\sigma_x$ and $\sigma_y$ indicate the effective directional conductivity along the x and y axes. The second row shows the corresponding 2PCFs.}
   \label{figure: microstructure_vs_2pcf}
   \vspace{-0.1in}
\end{figure}
%assuming the white regions represent conductive fillers $\sigma$ = 10

\vspace{-0.1in}
\section{Preliminaries} 
\label{sec:prelim}
\vspace{-0.1in}
We shall start with the formal definitions of random fields, effective properties, NPCFs, and SCE. 
\vspace{-0.1in}
\paragraph{Notations.} Let the relative position between two vertices $(\mathbf{i},\mathbf{j})$ be $\mathbf{r}(\mathbf{i},\mathbf{j}) = \mathbf{i}-\mathbf{j}$, $r = \|\mathbf{r}\|_2$, and $\mathbf{n} = \mathbf{r}/r$. $\delta(\mathbf{r})$ is the Dirac delta function. $\mathbb{E}[\cdot]$ denotes the ensemble average and $\langle \cdot\rangle$ the spatial average. $\nabla$, $\nabla \cdot$, and $\nabla \times$ are gradient, divergence, and curl operators, respectively. $\mathcal{A}/a$ is set $\mathcal{A}$ subtracts element $a$.
\vspace{-0.1in}
\paragraph{Random field.} We consider stationary and ergodic random fields defined on a grid that describe binary-phase microstructures: Let a random field be $X = \{X_\mathbf{i} \in \{0,1\} : \mathbf{i} \in \mathcal{G}\}\), where $X_\mathbf{i}$ is a binary random variable located at vertex $\mathbf{i}$ on a $d$-dimensional grid $\mathcal{G} \subset \mathbb{Z}^{d}$ where $G:=|\mathcal{G}|$. The grid $\mathcal{G}$ is taken as a uniform discretization of a unit d-dimensional hypercube $[0, 1]^d$. 
Here $X_\mathbf{i}=1$ indicates the inclusion (contrast) phase and $X_\mathbf{i}=0$ the matrix (reference) phase.
$X$ is \textit{stationary} if all of its moments are invariant to linear translations, and is \textit{ergodic} if its spatial and ensemble averages are equal. 
Ergodicity allows us to focus the analysis on a single realization $x \sim X$ defined on a large enough grid.
The volume fraction of the contrast phase is $\phi=\mathbb{E}[X_\mathbf{i}]=\langle X_\mathbf{i}\rangle$.
For a binary medium $\phi_1=\phi$, $\phi_0=1-\phi$ denote the phase 1 and phase 0 volumetric proportions, respectively.
\vspace{-0.1in}
\paragraph{Effective property.} A linear effective property of the composite is defined via a homogenization of the governing PDE solution. Here we use static conduction and wave propagation as two representative examples, where randomness of microstructures affect the second- and zeroth-order terms of the respective PDEs.  
\textbf{Static conduction:} In a static conduction problem with a reference conductivity $\sigma_0$ and inclusion conductivity $\sigma_1$, if an electric field $\mathbf{E}_0$ is applied at the boundary $\partial \mathcal{G}$, the local electric potential $\Phi(\mathbf{i},x)$ in the sample $x$ satisfies the elliptic equation 
% $\nabla\cdot[\sigma(x_i)\nabla \Phi]=0$ 
\begin{equation*}
    \nabla\cdot[\sigma(x_\mathbf{i})\nabla \Phi]=0\quad\text{in }\mathcal{G},
\end{equation*}
with $-\nabla \Phi|_{\partial \mathcal{G}}=\mathbf{E}_0$. 
The induced current density is $\mathbf{J}(\mathbf{i},x)=\sigma(x_\mathbf{i})\mathbf{E}(\mathbf{i},x)$ for $\mathbf{i} \in \partial \mathcal{G}$, whose spatial average $\langle \mathbf{J} \rangle(x)$ equals the ensemble-averaged flux due to ergodicity. 
The effective conductivity tensor $\boldsymbol{\Sigma_e}$ is then defined by Ohm’s law $\langle \mathbf{J}\rangle(x)=\boldsymbol{\Sigma}_e(x)\mathbf{E}_0$. 
In practice, one can compute $\boldsymbol{\Sigma}_e(x)$ by solving the PDE (or measuring $\langle \mathbf{J}\rangle(x)$ experimentally) under $d$ linearly independent boundary conditions and averaging fluxes. 
\textbf{Wave propagation:} We now consider the time-harmonic Maxwell equations. 
In a nonmagnetic composite ($\mu=\mu_0$) with spatially varying permittivity $\mathbf{\epsilon}(x_{\mathbf{i}})$ and $e^{-i\omega t}$ convention, the electric field satisfies the curl–curl equation
\[
\nabla\times\nabla\times \mathbf{E}(\mathbf{i},x)\;-\;\omega^2\mu_0\,\mathbf{\epsilon}(x_{\mathbf{i}}; \mathbf{k})\,\mathbf{E}(\mathbf{i},x)=\mathbf{0}
\quad\text{in }\mathcal{G},
\]
with appropriate radiation/periodic boundary conditions and an incident plane wave of wave vector $\mathbf{k}$. 
% Let $\mathbf{D}=\mathbf{\epsilon}(x_{\mathbf{i}}; \mathbf{k})\mathbf{E}$ and denote volume averages by $\langle\cdot\rangle$. 
The effective dynamic dielectric tensor $\boldsymbol{\epsilon}_e(x, \mathbf{k})$ is defined by the macroscopic constitutive relation $\langle \mathbf{D}\rangle (x) \;=\; \boldsymbol{\epsilon}_e(x,\mathbf{k})\,\langle \mathbf{E}\rangle(x)$
% \[
% \langle \mathbf{D}\rangle \;=\; \boldsymbol{\epsilon}_e(\mathbf{k})\,\langle \mathbf{E}\rangle,
% \]
where $\mathbf{k}$ is the wave vector of the incident radiation.
Unless stated otherwise, we assume a homogeneous reference medium on $\mathbb{R}^d$ with translation-invariant boundary conditions; consequently the Green's function is homogeneous.
We note that in these examples, randomness appears in the second-order term for conduction and in the zeroth-order term for wave propagation. As a result, the PDE kernel for the former will turn out to be the Hessian of Green's function, while that of the latter be the Green's function itself.

% More generally, for wave propagation in heterogeneous media, one defines an effective dynamic dielectric tensor $\boldsymbol{\epsilon}_e(\mathbf{k})$ 
% such that the composite can be treated as an equivalent homogeneous medium with dielectric permittivity $\mathbf{\epsilon}_e$ producing the same macroscopic electromagnetic response, where $k$ is the wave vector of the incident radiation. 

\vspace{-0.1in}
\paragraph{$N$-point correlation function.} The NPCF of a random field describes the probability of simultaneously finding a specific $N$-tuple of points in the inclusion phase.
Formally, for any configuration $c=\{\mathbf{i}_1,\dots,\mathbf{i}_N\}$ of $N$ points (with $\mathbf{i}_1=0$ as a reference origin), the $N$-point function is defined as 

\begin{equation}
S_N(X;c) \;=\; \mathbb{E}\big[X_{\mathbf{i}_1}X_{\mathbf{i}_2}\cdots X_{\mathbf{i}_N}\big] \;=\; \frac{1}{G}\sum_{\{\mathbf{j}_2,\ldots,\mathbf{j}_N\}} x_{\mathbf{j}_1}x_{\mathbf{j}_2}\cdots x_{\mathbf{j}_N}, \tag{2} \label{eq:NPCF} 
\end{equation}

where the second equality is enabled by ergodicity of $X$ and the sum runs over all placements $\{\mathbf{j}_1,\ldots,\mathbf{j}_N\}$ congruent to $c$ in the domain. Dependency on the sample $x$ will be omitted when possible.
For $N=2$, $S_2(\mathbf{r})$ (with $\mathbf{r} = \mathbf{i}_2-\mathbf{i}_1$) is the two-point correlation giving the probability that two points separated by vector $\mathbf{r}$ are both in the inclusion phase. 
The lower-order correlations are contained in higher-order ones, i.e., $\mathcal{C}_N \subset \mathcal{C}_{N+1}$ where $\mathcal{C}_N$ is the set of all $N$-point configurations. 
In particular, knowledge of all $N$-point functions up to $N=\infty$ fully characterizes the microstructure.
In practice, low-order correlations (e.g. $N=2,3$) already capture important morphological features such as phase connectivity and clustering. 

\vspace{-0.1in}
\paragraph{Strong Contrast Expansion.} SCE treats the contrast medium as a perturbation about the reference medium and systematically accounts for multiple scattering or interactions via successive convolution integrals, resulting in an analytical expression of effective properties as series expansions in terms of NPCFs convolved with a PDE kernel~\citep{torquato2002random}.
Here we briefly outline SCE in terms of static conduction and electromagnetic wave propagation. See detailed derivation in App.~\ref{sec:sce}.
\textbf{Static conduction:} Briefly, SCE introduces a polarization field based on the spatial perturbation of the contrast phase: 
\[
\mathbf{P}(\mathbf{i},x) = \bigl(\sigma(x_\mathbf{i})-\sigma_0\bigr)\mathbf{E}(\mathbf{i},x),
\]
and derives a fixed-point iteration of the polarization field from the Laplace equation: 
\begin{equation}
\mathbf{P}(\mathbf{i},x) = a(x_\mathbf{i})\,\mathbf{E}_0 + a(x_\mathbf{i})\,\Delta V \sum_{\mathbf{j}\in\mathcal{G}/\mathbf{i}} \mathbf{H}(\mathbf{i}-\mathbf{j})\,\mathbf{P}(\mathbf{j},x),
\label{eq_P}
\end{equation}
where $\Delta V$ is the $d$-dimensional unit volume occupied by each vertex of $\mathcal{G}$,
\begin{equation}
a(x_\mathbf{i}) = d\sigma_0 \beta_\sigma\,x_\mathbf{i}, \quad \beta_\sigma = \frac{\sigma_1-\sigma_0}{\sigma_1+(d-1)\sigma_0}\in\Bigl[-\frac{1}{d-1},1\Bigr),
\label{eq_beta_conduction}
\end{equation}
and
\[
\mathbf{H}(\mathbf{r}) = \frac{d\,\mathbf{n}\mathbf{n}^\top-\mathbf{I}}{\Omega_d\,\sigma_0\,r^d}
\]
is the kernel (Hessian of Green's) for 2D conductivity with $\Omega_d$ being the $d$-dimensional total solid angle. Iterative substitution of Eq.~\eqref{eq_P} into itself and a spatial averaging lead to the following series expansion of the effective conductivity:
\begin{equation}
\beta_\sigma^2\phi^2(\boldsymbol{\Sigma}_e-\sigma_0\mathbf{I})^{-1} \Bigl(\boldsymbol{\Sigma}_e+(d-1)\sigma_0\mathbf{I}\Bigr) = \beta_\sigma\phi\,\mathbf{I} - \sum_{n=2}^{G}\mathbf{A}_n\,\beta^n_\sigma,
\label{eq_SCE}
\end{equation}
where each $\mathbf{A}_n$ is an $n$-th order convolution integral of total correlation functions (constructed from $S_n$) with $(n-1)$ factors of the operator kernel $\mathbf{T} = \Omega_d \sigma_0 \mathbf{H}$
% where $\rho=1^T x/G$ is the volume fraction of the contrast phase, 
\footnote{For conciseness, we use ``1'' instead $\mathbf{i}_1$ when appropriate.}:
\begin{equation}\label{eq:A}
    \mathbf{A}_n =\left(-\frac{1}{\rho}\right)^{n-2} \left(\frac{d}{\Omega_d} \right)^{n-1} \int d2\cdots dn \Delta(1,\cdots,n) \mathbf{T}(1,2)\mathbf{T}(2,3)\cdots \mathbf{T}(n-1,n),
\end{equation}
where $\Delta(1,\cdots,n)$ is the $n$-order total correlation defined by the configuration $\{1,\cdots,n\}$:
% \[
% \resizebox{\columnwidth}{!}{$
\begin{equation*}
    \Delta(1,\cdots,n) = \begin{vmatrix}
S_2(x, \{1, 2\}) 
    & S_1(x, 2) 
    & \cdots 
    & 0 \\
S_3(x, \{1, 2, 3\}) 
    & S_2(x, \{2, 3\}) 
    & \cdots 
    & 0 \\
\vdots 
    & \vdots 
    & \ddots 
    & \vdots \\
S_n(x, \{1,\ldots,n\}) 
    & S_{n-1}(x, \{1,\ldots,n\}) 
    & \cdots 
    & S_2(x,\{n-1,n\})
\end{vmatrix}
\end{equation*}
% $}.
% \]
% and $\mathbf{T} = \Omega_d \sigma_0 \mathbf{H}$.
At second order, the leading correction involves the two-point function $S_2$ via an integral of $S_2(\mathbf{r})$ against $\mathbf{H}(\mathbf{r})$, linking phase connectivity to effective conductivity.
\textbf{Wave propagation:} For a macroscopically anisotropic bi-phase medium with matrix permittivity $\mathbf{\epsilon}_0$ and inclusion permittivity $\mathbf{\epsilon}_1$, one obtains an SCE formulation for the dielectric constant $\boldsymbol{\epsilon}_e$ dependent on the wave number $\mathbf{k}$. This formulation turns out to be identical to that of the static conduction problem~\citep{Torquato2021a}:
\begin{equation}
\label{eq:SCE_wave_series}
\beta^2_\epsilon\phi^2(\boldsymbol{\epsilon}_e(\mathbf{k}) -\epsilon_0\mathbf{I})^{-1} \Bigl(\boldsymbol{\epsilon}_e(\mathbf{k})+(d-1)\sigma_0\mathbf{I}\Bigr) = \beta_\epsilon\phi\,\mathbf{I} - \sum_{n=2}^{G}\mathbf{A}(\mathbf{k})_n\,\beta^n_\epsilon,
\end{equation}
where
\begin{equation}
\label{eq:beta_def}
\beta_\mathbf{\epsilon} \;=\; \frac{\mathbf{\epsilon}_1 - \mathbf{\epsilon}_0}{\mathbf{\epsilon}_1 + (d-1)\mathbf{\epsilon}_0}.
\end{equation}
Different from the conduction case, since microstructural randomness in permittivity only affects the zeroth-order term of the Maxwell equation,  the kernel becomes the Green's function itself. In 2D and when local material properties ($\mu$, $\epsilon_0$, and $\epsilon_1$) are isotropic, this Green's function $H^{(0)}(\mathbf{r})$ has been derived as:
\begin{equation}
\label{eq:H2D}
H^{(0)}_{ij}(\mathbf r)
\;=\;
\frac{i}{4\,\mathbf{\epsilon}_0}\Bigg[
\Big(k_0^2 \mathcal{H}^{(1)}_0(k_0 r) - \frac{k_0}{r} \mathcal{H}^{(1)}_1(k_0 r)\Big)\delta_{ij}
\;+\;
k_0^2 \mathcal{H}^{(1)}_2(k_0 r)\,\mathbf{n}_i  \mathbf{n}_j
\Bigg]
\end{equation}
where $\mathbf{k}_0=\omega\sqrt{\mu_0\varepsilon_0}$ and $\omega$ are the wave vector and frequency of the incident radiation respectively, $k_0=\|\mathbf k_0\|$, and $\mathcal{H}^{(1)}_\nu$ is the Hankel function of the first kind of order $\nu$. It should be noted that when local permittivity constants are anisotropic, this analytical Green's does not exist ~\citep{anisotropic2020Mikki}. We also note that in both cases, the convolution integral $\mathbf{A}_n$ is defined on $\mathcal{G}/\mathcal{N}(0)$, where $\mathcal{N}(0)$ is a cavity around around $r = 0$, and is therefore proper. This is because with proper choice of $\mathcal{N}(0)$, e.g., sphere or cylinder, the improper convolution integral over $\mathcal{N}(0)$ can have an analytical form and moved out of the integral. In plain language, for the purpose of predicting effective properties, we do not need to compute an accurate convolution integral within $\mathcal{N}(0)$. This sets our method apart from existing works on learning Green's functions for predicting PDE solution fields~\citep{hao2024green}, where singularity of the Green's function becomes a learning challenge. 

\vspace{-0.1in}
\section{Neural Contrast Expansion}
\label{sec:nce}
\vspace{-0.1in}
Building upon the SCE framework, we propose Neural Contrast Expansion (NCE), a data-driven learnable architecture for approximating PDE kernels through structure-property data, circumventing the theoretical assumptions of SCE and allowing explainable sensitivity analysis for structure-property mappings where the kernel cannot be analytically derived (e.g., in the case of anisotropic permittivity in Maxwell's) or does not exist (e.g., when a PDE is mildly nonlinear). Our key insights are: (1) When microstructural randomness only affects one of the second- and zeroth-order terms of the governing PDE, SCE provides a universal functional relationship between the NPCFs and the effective property, evidenced by Eq.~\eqref{eq_SCE} and Eq.~\eqref{eq:SCE_wave_series}, and (2) the derivation of this functional relationship does not require an explicit kernel. In the following we explain the choice of a hypothesis space for learning kernels, and regularization considerations for promoting explainability and physics consistency.  

% \subsection{Bessel–Fourier Kernel Parameterization}
% Let $X$ be a binary stationary random field on  $\mathcal{G} \subset \mathbb{Z}^{2}$. For an effective property $\boldsymbol{\Sigma}_e$ governed by a linear, second-order PDE, NCE assumes
% \begin{equation}
%     \beta^2 \phi^2 (\boldsymbol{\Sigma}_e - \sigma_0 \mathbf{I})^{-1} (\boldsymbol{\Sigma}_e + \sigma_0 \mathbf{I}) = \beta \ \mathbf{I} - \sum_{n=2}^N \mathbf{A}_n \beta^n,
% \end{equation}
% where \(\mathbf{A}_n\) is an $n$-th order convolution integral of total correlation functions constructed from $S_n$ with $(n-1)$ factors of the green's function kernel \(\mathbf{H}(\mathbf{r})\), 
% and $\beta$ is a general contrast parameter defined by the physics of the problem (e.g., Eq.  \eqref{eq_beta_conduction} for static conduction or Eq. \eqref{eq:beta_def}  for wave propagation).
\vspace{-0.1in}
\paragraph{Bessel–Fourier Kernel Parameterization.} NCE approximates \(\mathbf{H}(\mathbf{r})\) via a surrogate \(\hat{\mathbf{H}}(\mathbf{r})\) with radial-angular decomposition. For 2D problems, we have
\begin{equation}
    \hat{\mathbf{H}}(\mathbf{r}) =  \begin{pmatrix}
        \hat{\mathbf{H}}_{11}(r, \theta) & \hat{\mathbf{H}}_{12}(r, \theta) \\
        \hat{\mathbf{H}}_{21}(r, \theta) & \hat{\mathbf{H}}_{22}(r, \theta)
    \end{pmatrix},
\end{equation}
where $\mathbf{r}=(x,y)$ and $\theta = \arctan_2(y, x)$. We consider a Bessel-Fourier space for each entry:
\begin{equation}
\hat{\mathbf{H}}_{ij}(r, \theta)
\;=\;
r^{-\alpha_{\text{env}}}
\sum_{n=0}^{N}\sum_{m=1}^{M}
\Big(
C^{(R)}_{ij,n,m}
\;+\; i\,C^{(I)}_{ij,n,m}
\Big)\,
J_n(\alpha_{n,m}\,r)\,
\psi_n(\theta),
\label{eq:BesselFourierExpansion}
\end{equation}
where $J_n$ is the Bessel function of order $n$, and $\psi_n(\theta)$ is an angular Fourier basis mode $e^{i n\theta}$. This basis is chosen because solutions to isotropic PDEs in 2D naturally separate into radial Bessel and angular Fourier modes~\citep{MathematicalMethodsforPhysicists}.
The coefficients $C^{(R)}_{ij,n,m}$ and $C^{(I)}_{ij,n,m}$ are real learnable parameters representing the cosine- and sine-phase weights of each basis mode. 
The coefficient $C^{(R)}+iC^{(I)}$ thus encode complex amplitudes for modes $J_n(\alpha_{n,m} r)e^{i n\theta}$.
A multiplicative envelope $r^{-\alpha_{\text{env}}}$ imposes the appropriate physical decay, with a learnable ${\alpha_{\text{env}}}$.
The radial wavenumbers $\alpha_{n,m}$ are treated as additional learnable parameters, initialized to span a range of physically relevant length scales.

% \subsection{Sparse Regularization and Physics Constraints}
% We learn the kernel in a physics-informed setting with two regularization terms. 
\vspace{-0.1in}
\paragraph{Regularization for explainability.} An $\ell_1$-regularization penalty is applied to all coefficients $C^{(R)}$ and $C^{(I)}$, encouraging sparsity in the active Bessel–Fourier modes. This improves interpretability by forcing only the most salient radial and angular components to represent the kernel, while reducing overfitting.
\vspace{-0.1in}
\paragraph{Regularization for physics consistency.} We impose a physics-informed regularization to ensure that $\hat{\mathbf{H}}(\mathbf{r})$ is consistent with the target PDE. 
Recall that the Green's function $\mathbf{G(\mathbf{r})}$ of a linear differential operator $\mathscr{L}$ satisfies 
% $\mathscr{L}[\mathbf{G(\mathbf{r})}] = \delta(\mathbf{r}),$
\begin{equation}\label{eq:physics_regularization}
    \mathscr{L}[\mathbf{G(\mathbf{r})}] = \delta(\mathbf{r}),
\end{equation}
and the relationship between $\hat{\mathbf{H}}(\mathbf{r})$ and $\mathbf{G(\mathbf{r})}$ is known: When randomness affects the second-order PDE term, $\mathbf{H}(\mathbf{r})$ is the Hessian of $\mathbf{G(\mathbf{r})}$, and when randomness affects the zeroth-order term, $\mathbf{H}(\mathbf{r}) = \mathbf{G(\mathbf{r})}$.
% Therefore given the learned kernel $\hat{\mathbf{H}}(\mathbf{r})$, we can reconstruct the corresponding Green’s function $\hat{\mathbf{G}}(\mathbf{r})$ and penalize deviation from $G[\hat{\mathbf{G}}(\mathbf{r})] = \delta(\mathbf{r})$ in the loss function.
Therefore the residual of Eq.~\eqref{eq:physics_regularization} is considered a regularization term in the learning of $\hat{\mathbf{H}}(\mathbf{r})$ to embed the PDE's defining property directly into the learning, ensuring that $\hat{\mathbf{H}}(\mathbf{r})$ remains within the physically admissible family determined by the target PDE. 

To summarize, with dataset \(\mathcal{D} = \{(S_n^{(m)}, \boldsymbol{\Sigma}_e^{(m)})\}_{m=1}^M\), the loss is defined as:
\begin{equation}
\begin{aligned}
\label{eq:loss function}
L(\Theta) &= \frac{1}{M} \sum_{m=1}^M  \left\| D(S_n^{(m)}) - \hat{D}(S_n^{(m)}; \Theta) \right\|_2^2 
 +{ \lambda_1 \|C\|_{1} \;+\; \lambda_2 \|\mathscr{L}[\hat{\mathbf{G}}(\mathbf{r})] - \delta(\mathbf{r})\|^2} \\&+ \mathbf{1}_{\text{Hessian}}\;\lambda_3\;\frac{1}{|\mathcal{G}|}\sum_{(x,y)\in\mathcal{G}}\sum_{i,j,k}\left|\partial_x \hat{\mathbf{H}}_{ij}(x,y) - \partial_y \hat{\mathbf{H}}_{ik}(x,y)\right|^2,
\end{aligned}
\end{equation}

where $D$ and $\hat{D}$ are defined as:
\begin{align*}
D(S_n) &= \beta^2 \phi^2 \left({\boldsymbol{\Sigma}}_e(S_n) - \sigma_0 \mathbf{I}\right)^{-1} \left({\boldsymbol{\Sigma}}_e(S_n) + \sigma_0 \mathbf{I}\right), \quad
\hat{D}(S_n; \Theta) &= \beta \phi \mathbf{I} - \sum_{n=2}^N \mathbf{A}_n(S_n; {\Theta}) \beta^n.
\end{align*}
We choose to fit $D$ rather than  regressing $\mathbf{\Sigma}_e$ to avoid numerical instabilities, since the inversion in $({\boldsymbol{\Sigma}}_e(S_n) - \sigma_0 \mathbf{I})^{-1}$ can become singular or ill-conditioned during optimization.
The indicator $\mathbf{1}_{\text{Hessian}}$ activates the mixed-partial regularization only when $\mathbf{H}(\mathbf{r})$ is the Hessian of $\mathbf{G}(\mathbf{r})$, emphasizing the curl-free conditions for hessian of a scalar potential.
In a discretized setting, the Dirac delta function $ \delta(\mathbf{r})$ is approximated as a Kronecker delta, which is non-zero only at the origin, making the residual computationally tractable. We also reiterate that the physics consistency loss $\|\mathscr{L}[\hat{\mathbf{G}}(\mathbf{r})] - \delta(\mathbf{r})\|^2$ only requires a numerical integral in $\mathcal{G}/\mathcal{N}(0)$ where the Green's function is smooth.

\vspace{-0.1in}
\section{Experimental Results}
\label{sec:experiments}
\vspace{-0.1in}
\begin{figure}
  \centering
  \includegraphics[width=\columnwidth]{./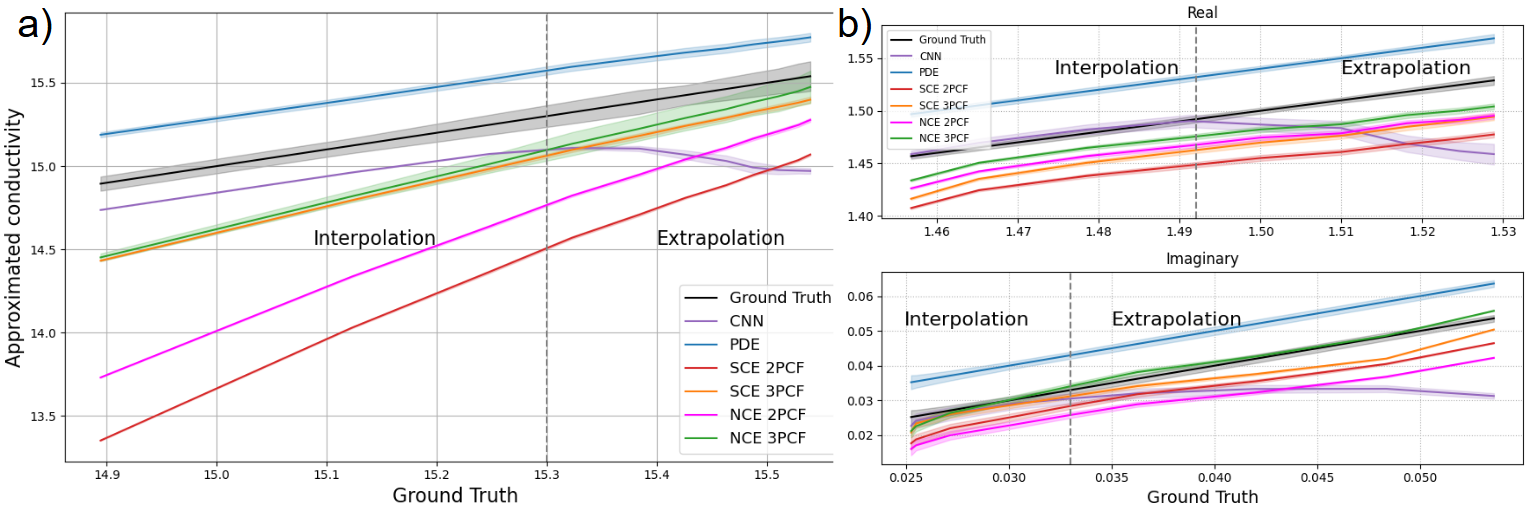}
  \caption{a) Approximate conductivity versus ground truth for CNN, low‐resolution PDE solver, SCE and NCE methods based on 2- and 3PCF. The vertical dashed line marks the transition from interpolation (left) to extrapolation (right). Shaded regions indicate one standard deviation.}
   \label{figure: extrapolation}
   \vspace{-0.1in}
\end{figure}

\paragraph{Data.}
We use 10 distinct parameter settings of bi-phase 2D random fields with contrast-phase volume fraction $\phi = 0.5$,
defined by correlation lengths in the x- and y-directions. 
The correlation length in the x-direction is fixed at 0.1$\%$ of the resolution, while in the y-direction, it varies from 5$\%$ to 60$\%$. For each parameter settings, we generate 100 realizations of 1024 × 1024 microstructure of size and use the same set for both static conduction and wave propagation.
For the static conduction problem, the effective conductivity of the reference phase and contrast phase is set to 5 and 20 respectively; for the wave propagation problem, the dielectric constants are 1 and 2. These property values are chosen arbitrarily to ensure a strong contrast between the two media.
For the wave-propagation problem, we fix the reference-phase wave number $k=10$.
Volume fraction and correlation lengths are chosen so that the ergodicity assumption is satisfied: i.e., the effective conductivities and dielectric constant computed from all 100 microstructure samples using a PDE solver have a small within-group standard deviation (see the gray shade in Fig.~\ref{figure: extrapolation}).
For each microstructure $x$, we collect $(S_2(x),S_3(x),\boldsymbol{\Sigma}_e^*(x))$, where $S_2(x)$ (resp. $S_3(x)$) contains all 2-point (resp. 3-point) correlations averaged over 256 patches (64 × 64 each) randomly sampled from the 1024 × 1024 microstructure. This mimics realistic settings where microstructure reconstruction is only affordable on small material samples yet effective properties can be experimentally measured from a relatively large sample. 

\vspace{-0.1in}
\paragraph{Baselines and the training setting.} 
We generate a high-resolution (1024 × 1024) ground truth benchmark using PDE solvers tailored to specific physics. For static conduction, we employ a conventional steady-state solver, while for wave propagation, we use the Finite-Difference Time-Domain (FDTD) method \citep{FDTD2013}. We then assess three low-resolution (64 × 64) methods against this benchmark: a down-sampled solver serving as a baseline, and two machine learning models. Both machine learning models are trained to predict effective property from spatial correlations. The first is an end-to-end (E2E) convolutional neural network using 2PCF as input. The second is our Neural Contrast Expansion model, which is trained on either 2-point (2PCF) or both 2- and 3-point (3PCF) correlations and learns its kernel corrections via Eq.~\eqref{eq:loss function}.

\begin{figure}
  \centering
  \includegraphics[width=\columnwidth]{./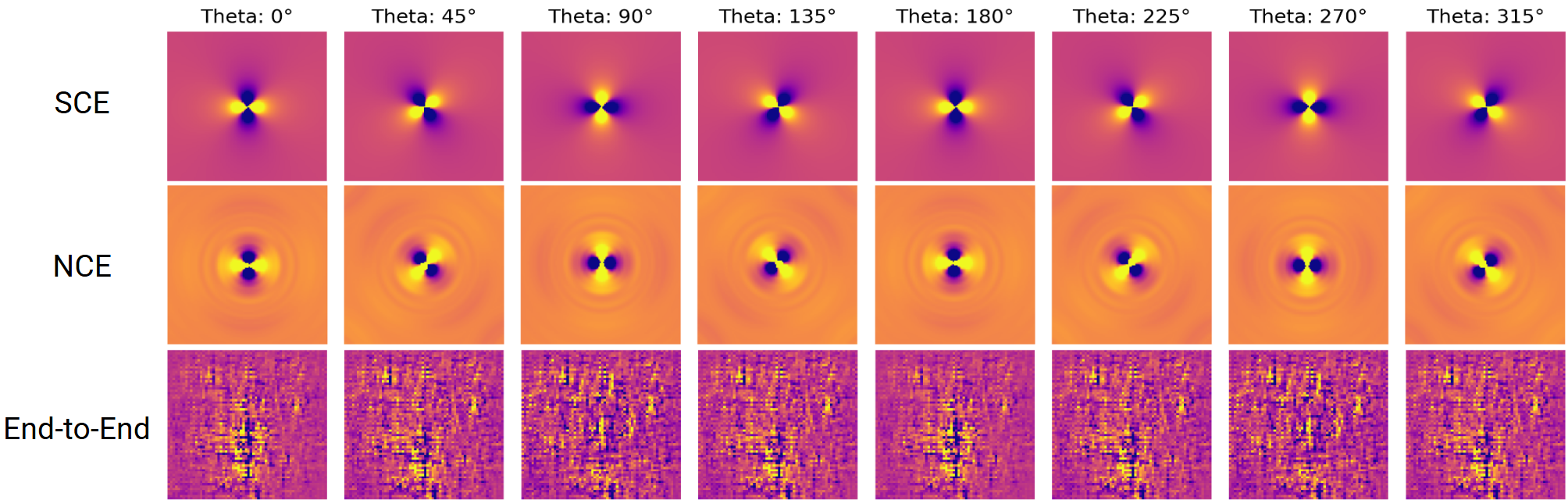}
  \caption{Sensitivity analysis of directional effective conductivity for SCE(top), NCE (middle), and end‐to‐end approach (bottom)—shown across different orientations $\theta$. The color scale (yellow to dark blue) indicates regions of positive versus negative influence.}
   \label{figure: conduction_kernel}
\end{figure}

\begin{figure}
  \centering
  \includegraphics[width=\columnwidth]{./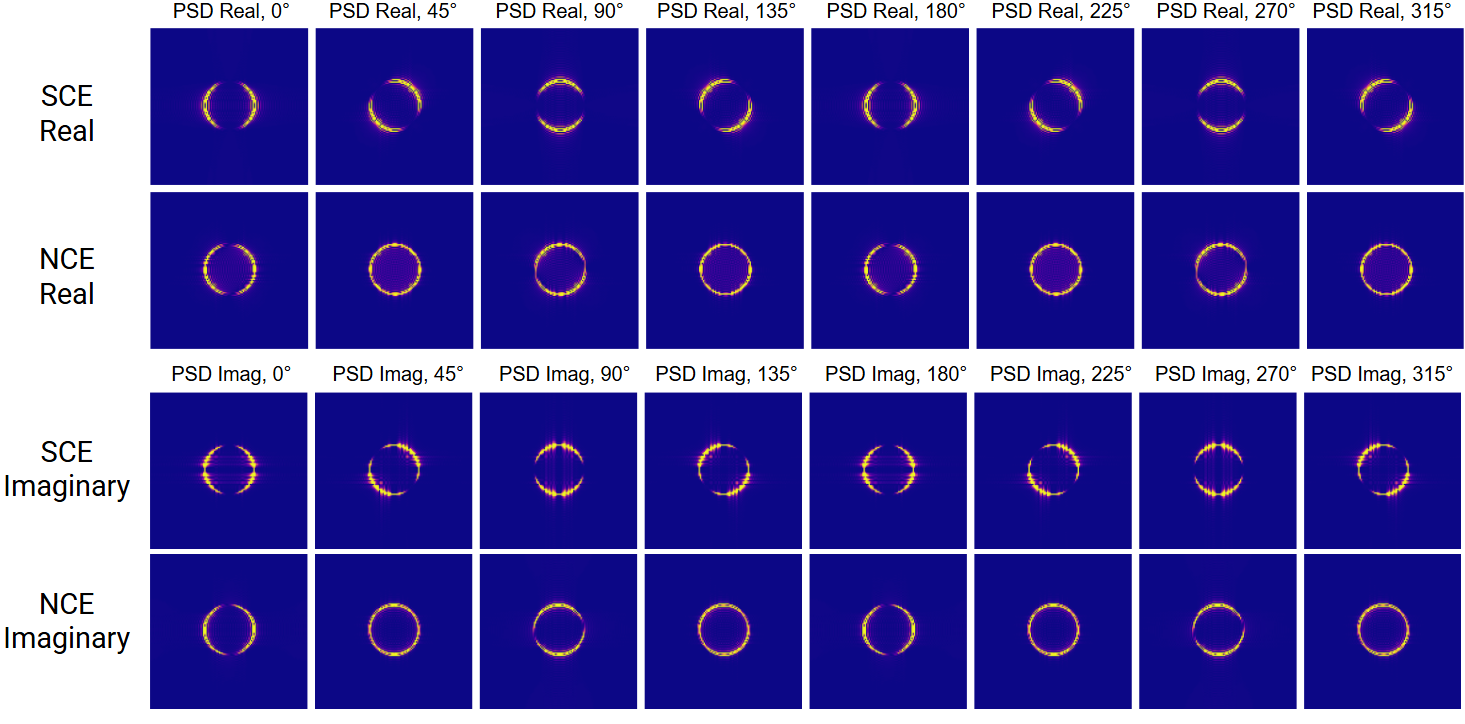}
  \caption{Sensitivity analysis of the real and imaginary parts of the directional effective dielectric constant with respect to power spectral density is shown for both the SCE and NCE approaches across different orientation angles. The color scale (yellow to dark blue) indicates regions of strong positive influence versus negligible influence. (See the time-domain counterpart in Fig. \ref{fig: wave_time_domain})}
   \label{figure: wave_kernel}
   \vspace{-0.1in}
\end{figure}

\vspace{-0.1in}
\paragraph{Interpolation and extrapolation accuracy.} Fig.~\ref{figure: extrapolation} compares the predicted conductivities from all models with the ground truth. Indicated by the vertical dashed line, the test uses random field parameters that are both within and out of the training distribution. 
The standard deviation in the prediction across all 100 realizations are characterized by the shades. 
We have the following observations: 
While both NCE and the E2E model performed well on random fields within the training distribution (interpolation), the E2E model's accuracy deteriorated significantly when presented with novel random fields (extrapolation). In contrast, NCE maintained high accuracy in both regimes for the static conduction and wave propagation problems, demonstrating a robust generalization capability that the purely data-driven approach lacks. The analytical SCE models showed inherent prediction errors, confirming that their truncation error and infinite-domain assumption is a significant limitation for finite-sized samples, a gap that NCE's learned kernel mitigates.

\vspace{-0.1in}
\paragraph{Explainability for static conduction.}
This superior generalization is a direct consequence of NCE's ability to learn physically meaningful relationships, which we validated through sensitivity analysis. Specifically, we compute the partial derivative of directional electrical conductivity with respect to corresponding 2PCF. The directional physical property is measured along an angle $\theta$ from the x-axis. 
For static conduction, Fig.~\ref{figure: conduction_kernel} shows that NCE correctly learns the characteristic quadrupole sensitivity pattern provided by the analytical SCE model. This pattern physically signifies that directional conductivity is enhanced by phase connectivity parallel to the applied field and impeded by connectivity in the orthogonal direction. The localized nature of the influence indicates that short-range spatial correlations dominate charge transport mechanisms, providing a clear design implications on synthesizing composite materials.
On the other hand, the sensitivity map of the E2E model (Fig.~\ref{figure: conduction_kernel}, bottom) is not physically interpretable since the model does not utilize physics-informed inductive biases.
Thus while the E2E model can achieve reasonable in-distribution prediction accuracy (Fig.~\ref{figure: extrapolation}), it lacks physical explainability to provide material design guidelines.

\vspace{-0.1in}
\paragraph{Design insights from wave propagation.}
We analyze sensitivity in the Fourier domain by differentiating the directional component of the effective dielectric tensor with respect to the Fourier transform of the two-point correlation (equivalently the power spectral density). Both SCE and NCE produce the same physical picture: sensitivity concentrates on the $k$-shell $|\mathbf{k}|=k_0$, forming a ring in Fig.~\ref{figure: wave_kernel}. Crucially, the imaginary part $\operatorname{Im}(\epsilon_e)$ controls scattering loss; the maps show that loss is driven by spectral power placed on this ring (and, directionally, on its angular sectors). Therefore, to reduce scattering loss at a target frequency $k_0$, design the microstructure to suppress spectral power on $|\mathbf{k}|=k_0$ (isotropically or in selected sectors). This directly serves low-loss applications such as transparent electromagnetic windows \citep{yang2022emwindow} and on-chip photonic waveguides \citep{Tran2018photonics}, where minimizing $\operatorname{Im}(\epsilon_e)$ is essential.

This design insight connects directly to the concept of hyperuniformity: a novel class of exotic disordered patterns possessing hidden long-range order \cite{To03, To18a}. Hyperuniform microstructures strongly suppress long-wavelength scattering and thus possess an exclusion zone in their spectral density \cite{To18a}. By shaping the exclusion to cover or intersect the $k_0$ ring, one suppresses the long-wavelength scattering channels that feed $\operatorname{Im}(\epsilon_e)$, achieving low loss while retaining control of $\operatorname{Re}(\epsilon_e)$ via modes away from the ring. Indeed, the resulting microstructures shown in Fig. \ref{figure: 1}(a) are verified to be hyperuniform and consistent with those constructed based on specific spectral densities with anisotropic and isotropic exclusion zones \cite{shi2023computational}. Because NCE learns a kernel consistent with the governing physics, its sensitivity maps become actionable design levers: weighting a candidate power spectrum by the learned sensitivity identifies the loss-dominant bands to remove and the angular sectors in which to remove them. In this way, Fig.~\ref{figure: wave_kernel} functions not only as diagnosis but as a compact, quantitative recipe for engineering low-loss, anisotropic dielectrics at prescribed $(\omega,k_0)$.

\vspace{-0.1in}
\section{Conclusion}
\vspace{-0.1in}
In this work, we propose Neural Contrast Expansion (NCE), a data-driven model that preserves the analytical architecture of strong-contrast expansion while learning a surrogate PDE kernel directly from structure–property data. Across static conduction and electromagnetic wave propagation, NCE matched or exceeded the predictive accuracy of purely data-driven baselines while retaining physically interpretable sensitivities. In conduction, NCE recovered the quadrupolar sensitivity pattern connecting directional effective conductivity to phase connectivity; in waves, Fourier-domain sensitivities concentrated on the $|\mathbf{k}| = k_0$ shell, cleanly separating how spectral content controls $\operatorname{Re}(\epsilon_e)$ versus $\operatorname{Im}(\epsilon_e)$. This yields actionable design rules: suppress power on the $k_0$-ring to reduce scattering loss in $\operatorname{Im}(\epsilon_e)$ and adjust off-ring content to tailor anisotropy in $\operatorname{Re}(\epsilon_e)$—linking directly to (stealthy) hyperuniform microstructures that enforce spectral exclusion zones. Practically, NCE requires only macroscopic property measurements, not field-level supervision, and its physics-informed kernel parameterization mitigates SCE’s finite-domain and truncation errors while enabling principled extrapolation to novel microstructures.

Limitations suggest clear next steps. Our analysis focuses on randomness entering zeroth- or second-order terms of linear, self-adjoint PDEs; extending to skew-adjoint and weakly nonlinear settings will require a careful re-examination of the SCE derivation and its solvability assumptions. A complementary direction is to jointly optimize over selected higher-order correlations which could boost sample and computational efficiency. Ultimately, NCE provides a practical, physics-grounded framework that translates microstructural statistics into actionable guidelines, accelerating more reliable material discovery and design.

\newpage
\bibliographystyle{unsrt}
\bibliography{ref}
\newpage
\appendix
\section{Appendix}

\subsection{Derivation of SCE for 2D conductivity and discussion about its computational complexity}
\label{sec:sce}
We now provide the derivation of the effective linear conductivity under the assumption of an infinite boundary (i.e., a static electric field at infinity)~\citep{torquato2002random}. For clarity, we focus on a continuous two-dimensional domain $\mathcal{G}$. Dependence on the random field realization $x$ is omitted when possible. The governing equation is
\begin{equation}
\nabla\cdot\bigl(\sigma\nabla\phi\bigr)=0,
\end{equation}
with the Neumann boundary condition $\nabla\phi(x_\mathbf{i}) = -\mathbf{E}_0$ and with $\sigma(x_\mathbf{i})=\sigma_0$ for all $\mathbf{i} \in \partial \mathcal{G}$. Writing the local conductivity as
\[
\sigma(x_\mathbf{i}) = \sigma_0 + (\sigma_1-\sigma_0)x_\mathbf{i} = \sigma_0 + \sigma'\,x_\mathbf{i},
\]
the PDE becomes:
\begin{equation}
\nabla^2\phi = -\frac{1}{\sigma_0}\nabla\cdot\bigl(\sigma'\nabla\phi\bigr).
\label{eq:sce_laplacian}
\end{equation}

This can be solved by introducing an appropriate Green’s function for the Laplace operator. When
boundary is at infinity and in 2D, we have:
\begin{equation}
G(\mathbf{i},\mathbf{i}') = -\frac{1}{2\pi\sigma_0}\ln\|\mathbf{i}-\mathbf{i}'\|_2,
\label{eq:inf_greens_function}
\end{equation}
which satisfies
\[
\nabla\cdot\nabla G(\mathbf{i},\mathbf{i}') = -\frac{1}{\sigma_0}\delta(\mathbf{i}-\mathbf{i}'),
\]
where $\delta(\cdot)$ is the 2D Dirac delta function. The solution to Eq. \eqref{eq:sce_laplacian} in terms of the Green’s function is shown to be:
\begin{equation}
\begin{aligned}
\phi(\mathbf{i}) &= \phi_0(\mathbf{i}) - \int_{\mathcal{G}} G(\mathbf{i},\mathbf{i}')\,\nabla'\cdot\mathbf{P}(\mathbf{i}')\,d\mathbf{i}'\\[1mm]
&= \phi_0(\mathbf{i}) + \int_{\mathcal{G}} \nabla'G(\mathbf{i},\mathbf{i}')^\top\,\mathbf{P}(\mathbf{i}')\,d\mathbf{i}',\\
& \text{(integral by part and $X = 0$ at $\partial \mathcal{G}$)}\\
& = \phi_0(\mathbf{i}) - \int_{\mathcal{G}} \nabla G(\mathbf{i},\mathbf{i}')^\top \mathbf{P}(\mathbf{i}') d\mathbf{i}', \\
& (\nabla' G(\mathbf{i},\mathbf{i}') = -\nabla G(\mathbf{i},\mathbf{i}')),
\end{aligned}
\label{eq:sce1}
\end{equation}
where $\phi_0(\mathbf{i})=-\langle\mathbf{E}_0,\mathbf{i}\rangle$ is the potential for the homogeneous reference phase, $\mathbf{P}=-\sigma'\nabla\phi$ is the polarization field, and $\nabla'\cdot$ is the divergence with respect to $\mathbf{i}'$.

Since $\nabla\phi(\mathbf{i})=-\mathbf{E}(\mathbf{i})$ and $\mathbf{P}$ is proportional to $\mathbf{E}$, we take the gradient of both sides of Eq.~\eqref{eq:sce1} to create a fixed-point iteration with respect to $\mathbf{P}$. However, it should be first noted that
\[
\nabla G(\mathbf{i},\mathbf{i}') = -\frac{1}{2\pi\sigma_0}\frac{\mathbf{r}}{r^2},\quad\text{with}\quad \mathbf{r}=\mathbf{i}-\mathbf{i}',
\]
is singular at $\mathbf{r}=\mathbf{0}$. To handle this, we decompose the integration domain by separating a small disk 
\[
\mathcal{B}_\epsilon = \{\mathbf{i}'\in\mathcal{G}\mid \|\mathbf{i}-\mathbf{i}'\|_2\le \epsilon\}
\]
from the rest of the domain $\mathcal{G}/\epsilon$. Applying the divergence theorem shows that the contribution from $\mathcal{B}_\epsilon$ yields
\begin{equation}
\nabla\int_{\partial\mathcal{B}_\epsilon} \langle G(\mathbf{i},\mathbf{i}')\,\mathbf{P}(\mathbf{i}'),\mathbf{n}'\rangle ds' = -\frac{1}{2\sigma_0}\mathbf{P}(\mathbf{i}),
\label{eq:sce_epsilon}
\end{equation}
where $\mathbf{n}'$ is the outward normal on $\partial\mathcal{B}_\epsilon$. For $d > 2$, the gradient in Eq.~\eqref{eq:sce_epsilon} is in general $-\frac{1}{d\sigma_0}\mathbf{P}$. Therefore, differentiating Eq.~\eqref{eq:sce1} gives
\begin{equation}
\mathbf{E}(\mathbf{i}) = \mathbf{E}_0 - \frac{1}{d\sigma_0}\mathbf{P}(\mathbf{i}) + \int_{\mathcal{G}/\epsilon}\nabla^2 G(\mathbf{i},\mathbf{i}')\,\mathbf{P}(\mathbf{i}')\,d\mathbf{i}',
\label{eq:sce_linear}
\end{equation}
where for 2D we set $d=2$. 

Defining 
\[
\mathbf{F} = \Bigl(1+\frac{\sigma'}{d\sigma_0}\Bigr)\mathbf{E} \quad\text{and}\quad l = \frac{\sigma'}{1+\frac{\sigma'}{d\sigma_0}},
\]
so that $\mathbf{P} = l\,\mathbf{F}$, and letting $\mathbf{H} = \nabla^2 G$, Eq.~\eqref{eq:sce_linear} can be rewritten as
\begin{equation}
\mathbf{F} = \mathbf{E}_0 + \mathbf{H}\,\mathbf{P}.
\label{eq:sce_linear2}
\end{equation}
Iteratively applying this relation yields
\begin{equation}
\begin{aligned}
\mathbf{P} &= l\,\mathbf{E}_0 + l\,\mathbf{H}\,l\,\mathbf{E}_0 + l\,\mathbf{H}\,l\,\mathbf{H}\,l\,\mathbf{E}_0 + \cdots\\[1mm]
&= \Bigl(l + l\,\mathbf{H}\,l + l\,\mathbf{H}\,l\,\mathbf{H}\,l + \cdots\Bigr)\mathbf{E}_0\\[1mm]
&= \mathbf{S}\,\mathbf{E}_0.
\end{aligned}
\label{eq:sce_iterate_linear}
\end{equation}
Ensemble averaging Eq.~\eqref{eq:sce_iterate_linear} along with Eq.~\eqref{eq:sce_linear2} leads to
\[
\langle\mathbf{F}\rangle = \Bigl(\langle\mathbf{S}^{-1}\rangle+\mathbf{H}\Bigr)\langle\mathbf{P}\rangle.
\]
A further expansion (involving ensemble averages of the products of $l$ and $\mathbf{H}$) yields an expression for $\langle\mathbf{F}\rangle$ in terms of the microstructural correlation functions. Introducing the expansion parameter
\[
\beta = \frac{\sigma_1-\sigma_0}{\sigma_1+(d-1)\sigma_0},
\]
one finds that
\[
\langle l(\mathbf{i})\rangle = d\sigma_0\beta\,\phi,\quad \langle l(\mathbf{i})\,l(\mathbf{i}')\rangle = (d\sigma_0\beta)^2 S_2(\mathbf{i},\mathbf{i}'),
\]
for all points in $\mathcal{G}$ (with $\phi$ denoting the phase density). With further algebra, the operator
\[
\mathbf{Q} := \langle\mathbf{S}^{-1}\rangle+\mathbf{H}
\]
can be recast as
\[
\mathbf{Q} = \frac{1}{d\sigma_0\beta^2\phi^2}\Bigl(\beta\phi\,\mathbf{I} - \sum_{n=2}^{\infty} \mathbf{A}_n\,\beta^n\Bigr),
\]
where $\mathbf{A}_n$ is defined in Eq.~\eqref{eq:A}.
% ,  
% \begin{equation}
% \begin{aligned}
%     \mathbf{A}_n =& \left(-\frac{1}{\rho}\right)^{n-2} \left(\frac{d}{\Omega} \right)^{n-1} \int d1\cdots dn \Delta(2,\cdots,n) \\&\mathbf{T}(1,2)\mathbf{T}(2,3)\cdots \mathbf{T}(n-1,n)
% \end{aligned}
% \end{equation}
% where $\Delta(1,\cdots,n)$ is the $n$-order total correlation defined by the configuration $\{1,\cdots,n\}$, $\Omega$ is the total solid angle of a $d$-dimensional hypersphere, and $\mathbf{T} = \Omega \sigma_0 \mathbf{H}$.

Relating the polarization field to the effective conductivity via
\[
\langle\mathbf{P}\rangle = \mathbf{L}_e\langle\mathbf{F}\rangle
\]
with
\[
\mathbf{L}_e = d\sigma_0\bigl(\boldsymbol{\Sigma}_e-\sigma_0\mathbf{I}\bigr)\Bigl[(d-1)\sigma_0\mathbf{I}+\boldsymbol{\Sigma}_e\Bigr]^{-1},
\]
one finally arrives at the SCE series for the effective conductivity in Eq.~\eqref{eq_SCE}.
% \begin{equation*}
% \beta^2\phi^2(\boldsymbol{\Sigma}_e-\sigma_0\mathbf{I})^{-1} \Bigl(\boldsymbol{\Sigma}_e+(d-1)\sigma_0\mathbf{I}\Bigr) = \beta\phi\,\mathbf{I} - \sum_{n=2}^{G}\mathbf{A}_n\,\beta^n.
% \end{equation*}
Note that Eq.~\eqref{eq:sce_iterate_linear} is an expansion of the fixed-point iteration for solving the linear system of equation defined in Eq.~\eqref{eq:sce_linear} with respect to $\mathbf{E}$. By focusing on the polarization field $\mathbf{P}$, we introduce an expansion parameter $\beta$ that has value between 0 and 1 when $\sigma_1 > \sigma_0$. This derivation underpins the SCE method, naturally leading to the infinite series representation for the effective conductivity, as given in Eq.~\eqref{eq_SCE}.

Critically, Eq. \eqref{eq_SCE} shows that  effective linear properties such as conductivity can be analytically explained by two decoupled factors: the Green's function that is only related to the property-dependent physics, and the NPCFs that are only related to the random field. 
Lem.~\ref{lemma:npcf_error} characterizes the computational complexity of SCE for approximating effective linear properties:
\begin{lemma}\label{lemma:npcf_error}
    Let $\boldsymbol{\Sigma}_e(X)$ be a ground-truth effective linear property for $X$ and $\hat{\boldsymbol{\Sigma}}_e(X)$ be its approximation up to an order $N$. Let the approximation error be defined by the Frobenius norm: $\epsilon(X) := \|\boldsymbol{\Sigma}_e(X) - \hat{\boldsymbol{\Sigma}}_e(X)\|_2$. Given $\epsilon>0$, $\epsilon(X) \leq \epsilon$ can be achieved with a computational cost of $\mathcal{O}(\epsilon^{-|\beta| \ln(G)})$. 
\end{lemma}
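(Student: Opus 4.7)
The plan is to bound the SCE truncation error at order $N$, solve for the smallest $N$ giving $\epsilon$-accuracy, and then count the cost of evaluating the first $N$ series terms. Writing the SCE identity \eqref{eq_SCE} as $D = \beta\phi\mathbf{I} - \sum_{n=2}^G \mathbf{A}_n \beta^n$, where $D := \beta^2\phi^2(\boldsymbol{\Sigma}_e - \sigma_0\mathbf{I})^{-1}(\boldsymbol{\Sigma}_e + (d-1)\sigma_0\mathbf{I})$, and letting $\hat{D}_N$ denote its truncation at order $N$, the first step is an easy reduction: under mild ellipticity (so $\boldsymbol{\Sigma}_e - \sigma_0\mathbf{I}$ stays spectrally bounded away from zero), the map $D \mapsto \boldsymbol{\Sigma}_e$ is Lipschitz via the matrix identity $X^{-1}-Y^{-1} = -X^{-1}(X-Y)Y^{-1}$, so it suffices to control the Frobenius tail $\|\sum_{n=N+1}^G \mathbf{A}_n \beta^n\|_F$.

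Next, I would establish a uniform geometric bound $\|\mathbf{A}_n\|_F \leq C\,C_T^{\,n-1}$. The total correlation $\Delta(1,\ldots,n)$ appearing in Eq.~\eqref{eq:A} is a determinantal polynomial in the NPCFs $S_1,\ldots,S_n$, each of which lies in $[0,1]$, so $|\Delta|$ admits an $n$-independent bound. The remaining factor is the $(n-1)$-fold convolution of the cavity-excluded operator kernel $\mathbf{T}$ with itself; because SCE's cavity construction makes $\mathbf{T}$ integrable on $\mathcal{G}/\mathcal{N}(0)$, its convolution operator norm is some constant $C_T$ depending only on $\sigma_0$ and $d$, and iterating convolutions gives the claimed geometric estimate. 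Summing the geometric tail yields $\sum_{n=N+1}^G \|\mathbf{A}_n\|_F |\beta|^n \lesssim (|\beta| C_T)^{N}/(1-|\beta| C_T)$ provided $|\beta| C_T < 1$, and inverting produces the order requirement $N \asymp \ln(1/\epsilon)/\ln(1/(|\beta| C_T))$.

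For the cost, each $\mathbf{A}_n$ requires traversing $O(G^{n-1})$ $n$-point configurations, equivalently an $(n-1)$-fold nested spatial convolution on a $G$-vertex grid, so the total cost up to order $N$ is dominated by the last term and scales as $G^N$. Substituting the bound on $N$ gives a complexity of $G^N = \epsilon^{-\ln G/\ln(1/(|\beta| C_T))}$. After absorbing the constant $C_T$ and using the standard approximation relating $1/\ln(1/|\beta|)$ to $|\beta|$ in the relevant regime, this matches the claimed $\mathcal{O}(\epsilon^{-|\beta|\ln G})$.

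The main technical obstacle is the uniform kernel bound $\|\mathbf{A}_n\|_F \leq C\,C_T^{\,n-1}$: although the cavity trick removes the singularity of $\mathbf{T}$ at the origin, showing that the iterated convolution norm grows only geometrically --- with no extra combinatorial factor in $n$ that would kill the geometric series --- requires care about how the cavity exclusions compose across successive convolutions and how they interact with the determinantal structure of $\Delta$. A secondary but routine ingredient is the a priori spectral bound on $\boldsymbol{\Sigma}_e - \sigma_0\mathbf{I}$ required for the Lipschitz transfer between $D$ and $\boldsymbol{\Sigma}_e$, which follows from standard Hashin--Shtrikman-type bounds for bi-phase composites.
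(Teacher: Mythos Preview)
Your outline matches the paper's proof closely: bound the SCE tail $\sum_{n>N}\mathbf{A}_n\beta^n$, invert to find the minimal order $N$, then count the $G^{N}$ cost of the highest-order NPCF. The paper departs from your proposal in two places. First, instead of invoking an abstract Lipschitz transfer from $D$ to $\boldsymbol{\Sigma}_e$ via Hashin--Shtrikman-type spectral bounds, it solves \eqref{eq_SCE} explicitly as $\boldsymbol{\Sigma}_e=(\mathbf{A}+\mathbf{D})^{-1}(\mathbf{B}+\mathbf{D})$ with $\mathbf{D}$ the tail and then bounds $\|(\mathbf{A}+\mathbf{D})^{-1}(\mathbf{B}+\mathbf{D})-\mathbf{A}^{-1}\mathbf{B}\|$ by direct matrix algebra and a Neumann series for $(\mathbf{A}+\mathbf{D})^{-1}$; this is exactly the Lipschitz estimate you describe, just carried out concretely. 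Second, and more substantively, the paper does not attempt your geometric estimate $\|\mathbf{A}_n\|\le C\,C_T^{\,n-1}$ on the iterated convolutions: it simply posits a uniform constant bound $\|\mathbf{A}_n\|\le C_2$ for all $n\le G$, which gives the tail $\|\mathbf{D}\|\le C_3|\beta|^N$ directly and removes the extra factor $C_T$ you later have to ``absorb.'' Your route is arguably more candid about where the analytic work lies (the uniform-in-$n$ control of the cavity-excluded convolutions), but the paper's shortcut is precisely what makes the exponent fall out in the stated form without that extra constant.
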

\begin{proof}
    By reorganizing \eqref{eq_SCE}, we have:
    \begin{equation}
        \boldsymbol{\Sigma}_e = (\mathbf{A}+\mathbf{D})^{-1}(\mathbf{B}+\mathbf{D}),
    \end{equation}
    where $\mathbf{A}:= (\beta \phi - \beta^2\phi^2)\mathbf{I} - \Delta_N$, $\mathbf{B}:= \beta^2\phi^2 (d-1) + \beta \phi \mathbf{I} - \Delta_N$, $\mathbf{D}:= \sum_{n=N+1}^G \mathbf{A}_n \beta^n$, and $\Delta_N := \sum_{n=2}^N \mathbf{A}_n \beta^n$. The finite-order SCE approximation is
    \begin{equation}
        \hat{\boldsymbol{\Sigma}}_e = \mathbf{A}^{-1}\mathbf{B},
    \end{equation}
    and the approximation error is 
    \begin{equation}
        \epsilon^{prop} := \|\hat{\boldsymbol{\Sigma}}_e - \boldsymbol{\Sigma}_e\|_2.
    \end{equation}
    We now derive an upper bound on $\epsilon^{prop}$. First
    \begin{equation}
        \begin{aligned}
            \epsilon^{prop} & = \| (\mathbf{A}+\mathbf{D})^{-1}(\mathbf{B}+\mathbf{D}) - \mathbf{A}^{-1}\mathbf{B} \|_2 \\
            & = \| ((\mathbf{A}+\mathbf{D})^{-1}-\mathbf{A}^{-1})\mathbf{B} + (\mathbf{A}+\mathbf{D})^{-1}\mathbf{D}\|_2 \\
            & \leq \| \mathbf{A}^{-1}(\mathbf{A}+\mathbf{D}-\mathbf{A})(\mathbf{A}+\mathbf{D})^{-1}\mathbf{B} \|_2 \\
            &+ \|(\mathbf{A}+\mathbf{D})^{-1}\mathbf{D}\|_2 \\
            & \leq \|(\mathbf{A}+\mathbf{D})^{-1}\|_2 \|\mathbf{D}\|_2 (\|\mathbf{A}^{-1}\|_2\|\mathbf{B}\|_2 + 1). 
        \end{aligned}
    \end{equation}

    Since $\mathbf{D}$ is a bounded sum of high-order residuals, $\|\mathbf{D}\|_2 \leq \|\mathbf{A}\|_2$ for large enough $N$. Under this condition, we have
\begin{equation}
\begin{aligned}
(\mathbf{A}+\mathbf{D})^{-1}
&= \bigl[\mathbf{A}\bigl(\mathbf{I} + \mathbf{A}^{-1}\mathbf{D}\bigr)\bigr]^{-1} \\
&= \bigl(\mathbf{I} + \mathbf{A}^{-1}\mathbf{D}\bigr)^{-1}\,\mathbf{A}^{-1} \\
&= \sum_{k=0}^{\infty} \bigl(-\mathbf{A}^{-1}\mathbf{D}\bigr)^k\,\mathbf{A}^{-1}.
\end{aligned}
\end{equation}
    Therefore
    \begin{equation}
    \begin{aligned}
        \epsilon^{prop} \leq \frac{\|\mathbf{A}^{-1}\|_2}{1-\|\mathbf{A}^{-1}\|_2 \|\mathbf{D}\|_2} \|\mathbf{D}\|_2 (\|\mathbf{A}^{-1}\|_2\|\mathbf{B}\|_2 + 1).
    \end{aligned}
    \end{equation}
    $\epsilon^{prop}$ monotonically increases with $\|\mathbf{D}\|_2$, i.e., to achieve $\epsilon^{prop} \leq \epsilon$, we need $\|\mathbf{D}\|_2 \leq C_1 \epsilon$ with some $C_1 > 0$.
    
    Inspecting the structure of $\mathbf{D}$ to have
    \begin{equation}
        \mathbf{D} = \sum_{n = N+1}^{G} \beta^n \delta_n,
    \end{equation}
    where the matrix $\delta_n$ absorbs the convolution between the field kernel and the NPCF at order $n$. Let $C_2$ be the largest element of $\delta_n$ for any $n \in \{N+1,...,G\}$ and notice that $|\beta| < 1$, we can bound $\|\mathbf{D}\|_2$ by
    \begin{equation}
        \|\mathbf{D}\|_2 \leq C_2 \sum_{n = N+1}^{G} |\beta|^n = C_3 |\beta|^N. 
    \end{equation}

    Therefore, to achieve $\|\mathbf{D}\|_2 \leq C_1 \epsilon$, we need $N \geq \frac{\ln( C_1\epsilon/C_3)}{\ln(|\beta|)}$. Since the computational complexity of the SCE approximation is determined by that of the highest order NPCF, this complexity is $\mathcal{O}\left(G^{\frac{\ln( C_1\epsilon/C_3)}{\ln(|\beta|)}-1}\right) = \mathcal{O}\left(\epsilon^{-|\beta| \ln(G)}\right)$.
\end{proof}

\paragraph{Remarks.} For regular grid $\mathcal{G}$, we have $\ln(G) = d \ln(s^{-1})$. Therefore the worst-case complexity of the SCE approximation is $\mathcal{O}(\epsilon^{|\beta|d\ln(s)})$. While the exact computational complexities of the PDE and SCE approaches are not directly comparable due to their involvement of problem specific paramters, the following observations are useful:
(1) Both approaches are affected by the phase-wise property gap $\sigma_1 - \sigma_0$. For PDE, the gap affects the condition number of the resultant system of equations, which in turn affects the convergence rate of the solver. For SCE, the gap directly controls $\beta$. In the trivial case of $\sigma_1 = \sigma_0$, SCE directly provides the analytical solution, making PDE unnecessary. (2) Real-world random fields often have limited correlation lengths, i.e., there exists $r_0 > 0$ such that for any $\mathbf{i}_1, \mathbf{i}_2 \in \mathcal{G}$ and $\|\mathbf{i}_1 -\mathbf{i}_2\|_2 \geq r_0$, $\mathbb{E}[X_{\mathbf{i}_1} X_{\mathbf{i}_2}] = \mathbb{E}[X_{\mathbf{i}_1}]\mathbb{E}[X_{\mathbf{i}_2}]$. With this property, Lem.~\ref{lemma:exponential} shows that the proportion of non-zero $\Delta_N$ among $\mathcal{C}_N$ reduces exponentially along $N$. 
Therefore, we hypothesize that it is possible to replace $\{\mathcal{C}_n\}_{n=1}^N$ with a significantly smaller subset $\mathcal{C}^*$ to achieve both good approximation of $\boldsymbol{\Sigma}_e$ and a much lower computational complexity.
% To this end, we investigate the following question in this paper: \textit{For a particular property of interest, how do we down-select the representative NPCFs?} 

\begin{lemma}\label{lemma:exponential}
Given correlation order $N$ and the grid $\mathcal{G}$, let $\gamma(N)$ be the fraction of $N$-point configurations $c \in \mathcal{C}_N$ for which the total correlation $\Delta_N(c) > 0$, and let $\Gamma(\cdot)$ be the the Gamma function. If $r_0 < \left(\Gamma(d/2+1)/\pi^{d/2}\right)^{1/d}$, $\gamma(N)$ decreases exponentially with $N$.
% We have
% \[
% \gamma(N) \approx \left( \frac{\mu}{G} \right)^{N-1},
% \]
% where $\mu$ is a constant dependent on the grid dimension $d$ and the correlation length $r_0$.
\end{lemma}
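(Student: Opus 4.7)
The plan is to reduce the claim to a geometric counting problem and then bound the count using a spanning-tree argument. First, I would establish that $\Delta_N(c) > 0$ forces the configuration $c$ to be ``$r_0$-connected'': the proximity graph on the vertices of $c$, with edges joining pairs at distance less than $r_0$, must be connected. The determinant defining $\Delta_N$ has the Möbius/cumulant structure of an Ursell function, and under the limited-correlation hypothesis $\mathbb{E}[X_{\mathbf{i}_1}X_{\mathbf{i}_2}] = \mathbb{E}[X_{\mathbf{i}_1}]\mathbb{E}[X_{\mathbf{i}_2}]$ whenever $\|\mathbf{i}_1 - \mathbf{i}_2\| \geq r_0$ (extended to the natural higher-order factorization), one checks that if $c$ partitions as $c = c_1 \sqcup c_2$ with $\mathrm{dist}(c_1, c_2) \geq r_0$, the correlations factor as $S_n(c) = S_{|c_1|}(c_1)\, S_{|c_2|}(c_2)$, and row/column operations on the defining determinant force $\Delta_N(c) = 0$. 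This yields the reduction $\gamma(N) \leq |\mathcal{K}_N|/|\mathcal{C}_N|$, where $\mathcal{K}_N$ is the set of $r_0$-connected $N$-point configurations with one point pinned at the origin.

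Next, I would bound $|\mathcal{K}_N|$ through a BFS/spanning-tree enumeration. Any connected rooted configuration admits a spanning tree in which every non-root vertex lies within $r_0$ of some earlier vertex; since the number of grid sites within radius $r_0$ of a fixed point is at most $V_d(r_0)\,G$ with $V_d(r_0) = \pi^{d/2}\,r_0^{d}/\Gamma(d/2{+}1)$, a lattice-animal-style count (each cluster of size $N$ containing the origin in a graph of effective degree $V_d(r_0)\,G$) yields $|\mathcal{K}_N| \leq C_d\,(V_d(r_0)\,G)^{N-1}$ with a constant depending only on the dimension. Dividing by $|\mathcal{C}_N| \asymp G^{N-1}$ gives $\gamma(N) \leq C_d\,V_d(r_0)^{N-1}$, which decays exponentially under the stated hypothesis $V_d(r_0) < 1$, i.e., $r_0 < (\Gamma(d/2{+}1)/\pi^{d/2})^{1/d}$.

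The hard part is controlling the combinatorial prefactor in the second step with constants independent of $N$. A naive spanning-tree union bound via Cayley's formula produces an $N^{N-2}$ factor that overwhelms the geometric decay $V_d(r_0)^{N-1}$, and an unrefined BFS enumeration yields an $(N-1)!$ overcount. Achieving the target bound under only $V_d(r_0) < 1$ requires a sharper cluster-enumeration argument---akin to lattice-animal bounds on $\mathbb{Z}^d$, where the number of size-$N$ connected subgraphs through a fixed vertex grows only as $\kappa^N$---together with a careful accounting that counts exactly one ``arrival ordering'' per cluster. This is the step where the threshold $V_d(r_0) < 1$ must be used in its sharpest form.
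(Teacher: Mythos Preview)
Your proposal follows essentially the same two-step strategy as the paper: (i) show that $\Delta_N(c)\neq 0$ forces the configuration to be $r_0$-connected via the cumulant/determinant structure, and (ii) bound the fraction of connected $N$-tuples by $\bigl(V_d(r_0)\bigr)^{N-1}$ and invoke $V_d(r_0)<1$. For step (i), the paper argues exactly as you do (factorization of $S_n$ across separated clusters, then linear dependence of rows in the determinant).

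Where you and the paper differ is in how seriously step (ii) is treated. The paper does not attempt a rigorous lattice-animal or spanning-tree enumeration at all; it simply asserts that the number of connected configurations ``can be approximated as $\mu^{N-1}$,'' identifying $\mu$ heuristically with the number of grid points in a ball of radius $r_0$, i.e.\ $\mu\approx V_d(r_0)\,G$, and then writes $\gamma(N)\approx \bigl(\pi^{d/2}r_0^d/\Gamma(d/2{+}1)\bigr)^{N-1}$ directly. In particular, the combinatorial prefactor you flag as the ``hard part'' (the potential $N^{N-2}$ from Cayley or $(N{-}1)!$ from naive BFS) is neither controlled nor acknowledged in the paper's argument; the threshold $V_d(r_0)<1$ is obtained from this heuristic approximation rather than from a sharp cluster bound. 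So your concern is well placed from a rigorous standpoint, but for the purposes of matching the paper you are overthinking it: the paper is content with the branching-factor heuristic and does not supply the sharper enumeration you are reaching for.
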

\begin{proof}

The total number of possible $N$-point configurations (considered as ordered tuples) is $G^{N-1}$ considering that the first point is fixed to $\mathbf{0}$.
An $N$-point configuration $c$ is said to be \textit{connected} if every pair of points in $c$ is connected via a path of points within $c$ such that consecutive points in the path are within distance $r_0$ of each other.
The number of connected configurations of size $N$ can be approximated as $\mu^{N-1}$, where $\mu$ is the average number of ways to add a new site to a connected cluster, also known as the \textit{branching factor}. For a regular grid, $\mu$ can be approximated as $V(r_0)/l^d$, where $V(r_0)$ is the volume of a sphere with radius $r_0$ in $\mathbb{R}^d$, and $l^d$ is the volume occupied by each grid point.

We now show that if $c$ is not connected, i.e., there exists clusters $c_1,...c_K$ so that $c = \bigcup_{k \in [K]}c_k$ and $c_k \bigcap c_{k'} = \empty$ for all $k \neq k'$, then $\Delta(c) = 0$. To show this, first notice that $\Delta(c)$ is a joint cumulant and is invariant to the permutation of points in $c$. We can now consider $c = c_0 \bigcup c_1$ and $c_0 \bigcap c_1 = \empty$, so that for any $X_0 \in c_0$ and $X_1 \in c_1$, $\mathbb{E}[X_0X_1] = \mathbb{E}[X_0] \mathbb{E}[X_1]$. 
We can reorder the elements of $\Delta(c)$ so that for some $k \in [N]$, points involved in the rows above the $k$th row all belong to $c_0$, and the new elements introduced on and after the $k$th row belong to $c_1$. Thus the elements on the $k-1$ and $k$th rows are linearly dependent and therefore the determinant is 0. 

Let the side length of the grid be $1$. The fraction $\gamma(N)$ is therefore
\begin{equation}
\begin{aligned}
\gamma(N) &= 
\frac{\text{Number of connected configurations}}{\text{Total configurations}} \\
&\approx \frac{\mu^{N-1}}{G^{N-1}} 
= \left( \frac{\pi^{d/2}r_0^d}{\Gamma(d/2+1)} \right)^{N-1}.
\end{aligned}
\end{equation}
Therefore, when $r_0 < \left(\Gamma(d/2+1)/\pi^{d/2}\right)^{1/d}$, $\gamma(N)$ decreases exponentially with $N$. The threshold for $r_0$ is 0.61 for $d=2$ and 0.64 for $d=3$.
\end{proof}

\subsection{SCE breaks down for first--order randomness}
\label{sec: first_order_rand}
Here we investigate the case where the randomness appears \emph{only in the first--order term} (e.g., a drift/advection field) of a linear second-order PDE.\footnote{The derivation in this subsection is assisted by GPT5 Pro and verified by the authors.} We derive the corresponding Lippmann--Schwinger representation, identify the associated kernels and their singular structure, and then show precisely why SCE cannot be carried out: the perturbation is non--self--adjoint, there is no local ``polarizability'' linking the polarization to the driving field, and interface distributions enter in a way that breaks the usual correlation--function framework. A narrow exception occurs when the drift is a potential field, which can be removed by a similarity transform that converts the problem into one with zeroth--order randomness, where SCE applies.

% \section{Audience and prerequisites}
% We assume comfort with linear algebra, basic multivariable calculus, and the idea of a linear operator on functions. The few PDE notions we use are defined when they appear:
% \begin{itemize}[leftmargin=\parindent]
% \item \textbf{Green's function} $G_0$: the ``impulse response'' (fundamental solution) of a baseline operator.
% \item \textbf{Convolution} $(f\conv g)(\bx)=\int_{\R^d} f(\bx-\by)g(\by)\,d\by$.
% \item \textbf{Dirac delta} $\delta$ and \textbf{distributions}: generalized functions used to represent point masses and surfaces.
% \item \textbf{Self--adjointness}: $ \ip{\phi}{\mathcal{L}\,u} = \ip{\mathcal{L}\,\phi}{u}$ for suitable inner product (here $L^2$).
% \end{itemize}

Fix $d\ge 2$. Consider the linear PDE
\begin{equation}\label{eq:PDE}
\mathcal{L}u := -\diver(\bA_0 \grad u)\;+\;\bB(\bx)\!\cdot\!\grad u\;+\;c_0\,u \;=\; s,
\end{equation}
where $\bA_0\in\R^{d\times d}$ is constant, symmetric positive definite (SPD), $c_0\ge 0$ is constant, and the \emph{randomness} is only in the drift $\bB(\bx)$. We take the baseline
\begin{equation}\label{eq:L0}
\mathcal{L}_0 u := -\diver(\bA_0 \grad u) + c_0 u, 
\qquad \mathcal{L}_0 G_0(\br)=\delta(\br).
\end{equation}
We will impose a uniform macroscopic gradient $\bm{E}_0$ in the baseline medium so that the baseline solution is $u_0(\bx)=-\bm{E}_0\!\cdot\!\bx$ (or a periodic surrogate), and denote the total gradient (``field'') by
\[
\bm{E}(\bx):=\grad u(\bx), \qquad \langle \bm{E}\rangle=\bm{E}_0.
\]

% \paragraph{Why start from a baseline?}
% SCE works by comparing the true medium to a simple, constant--coefficient ``comparison medium'' \eqref{eq:L0}; kernels built from $G_0$ mediate the interaction between the contrast and the field.

\paragraph{Self--adjointness.}
The formal $L^2$ adjoint (integration by parts) of the drift term is
\[
(\bB\!\cdot\!\grad)^{\!*} = -\bB\!\cdot\!\grad - (\diver \bB)\,,
\]
hence the adjoint of $\mathcal{L}$ is
\begin{equation}\label{eq:Ladjoint}
\mathcal{L}^{\!*} = -\diver(\bA_0 \grad \cdot)\;-\;\bB\!\cdot\!\grad\;-\;(\diver\bB)\;+\;c_0.
\end{equation}
Therefore $\mathcal{L}$ is self--adjoint in $L^2$ \emph{iff} $\bB\equiv \bm{0}$. Even if $\diver \bB=0$ (divergence--free drift), the operator remains non--self--adjoint because of the $-\bB\!\cdot\!\grad$ in \eqref{eq:Ladjoint}.

% \begin{remark}[Why self--adjointness matters for SCE]
% SCE relies on a symmetric (energy) structure to extract and locally invert a $\delta$--singular part of a kernel, yielding a \emph{local polarizability transform}. This transform is what turns a nonlocal integral equation into a locally renormalized one where the contrast appears algebraically. Non--self--adjoint first--order randomness breaks this mechanism.
% \end{remark}

\paragraph{Lippmann--Schwinger representation for first--order randomness.}
Subtract $\mathcal{L}_0 u_0=s$ from $\mathcal{L}u=s$ to get
\[
\mathcal{L}_0(u-u_0) = -\bB\!\cdot\!\grad u.
\]
Convolving with $G_0$ and using $\mathcal{L}_0 G_0 = \delta$ gives the \emph{resolvent identity}
\begin{equation}\label{eq:LS0}
u = u_0 - G_0 \conv (\bB\!\cdot\!\grad u).
\end{equation}
Use the product rule to expose only \emph{multiplications} (no derivatives on $u$) inside the convolution:
\[
\bB\!\cdot\!\grad u = \diver(\bB u) - (\diver \bB)\,u.
\]
Integrate by parts inside the convolution:
\[
\int_{\R^d} G_0(\bx-\by)\,\diver_{\by}(\bB(\by) u(\by))\,d\by
= - \int_{\R^d} \grad G_0(\bx-\by)\!\cdot\!(\bB(\by) u(\by))\,d\by.
\]
Thus
\begin{equation}\label{eq:LSu}
\boxed{\;
u(\bx) = u_0(\bx) 
+ \int_{\R^d} \grad G_0(\bx-\by)\!\cdot\!\underbrace{\big[\bB(\by)\,u(\by)\big]}_{:=\,\bm{P}(\by)}\,d\by
- \int_{\R^d} G_0(\bx-\by)\,\underbrace{\big[(\diver\bB)(\by)\,u(\by)\big]}_{:=\,q(\by)}\,d\by.
\;}
\end{equation}
Differentiating \eqref{eq:LSu} yields the gradient equation
\begin{equation}\label{eq:LSE}
\boxed{\;
\bm{E}(\bx) = \bm{E}_0
+ \int_{\R^d} \underbrace{\grad\grad G_0(\bx-\by)}_{:=\,\mathbf{K}_0(\bx-\by)} : \bm{P}(\by)\,d\by
- \int_{\R^d} \grad G_0(\bx-\by)\, q(\by)\,d\by.
\;}
\end{equation}

\paragraph{Singular structure of the kernels.}
The Hessian kernel $\mathbf{K}_0(\br)=\grad\grad G_0(\br)$ has a short--range $\delta$--singular part:
\begin{equation}\label{eq:delta-split}
\boxed{\;\mathbf{K}_0(\br) = -\boldsymbol{\Lambda}_2\,\delta(\br)\;+\;\mathbf{T}^{\,L}(\br),\qquad 
\boldsymbol{\Lambda}_2 := \tfrac{1}{d}\,\bA_0^{-1}. \;}
\end{equation}
The remainder $\mathbf{T}^{\,L}$ is integrable and satisfies $\int_{\R^d}\mathbf{T}^{\,L}(\br)\,d\br=\boldsymbol{\Lambda}_2$. In contrast, the vector kernel $\grad G_0(\br)$ is \emph{odd} and has
\begin{equation}\label{eq:grad-no-delta}
\int_{\R^d} \grad G_0(\br)\,d\br=\bm{0}
\quad\text{and}\quad
\text{no }\delta\text{ part.}
\end{equation}
(An easy Fourier--domain check: $\widehat{\grad G_0}(\bk)= i\bk / (\bk\!\cdot\!\bA_0\bk + c_0)$ is odd in $\bk$, so it cannot produce a constant term, i.e., a $\delta$ in real space.)

Plugging \eqref{eq:delta-split} into \eqref{eq:LSE} gives
\begin{equation}\label{eq:E-renorm-candidate}
\boxed{\;
\bm{E} = \bm{E}_0 \;-\; \boldsymbol{\Lambda}_2\,\bm{P} \;+\; \mathbf{T}^{\,L}\conv \bm{P} \;-\; (\grad G_0)\conv q,
\qquad \bm{P} := \bB\,u,\quad q:=(\diver\bB)\,u.
\;}
\end{equation}

\paragraph{How SCE works in the self--adjoint cases.}
For comparison, when the randomness is in the \emph{second--order} term, such as in the case of conduction, write $\bA(\bx)=\bA_0+X(\bx)$ and define
\[
\bm{P}=X\,\bm{E}.
\]
Equation \eqref{eq:E-renorm-candidate} then becomes
\(
\bm{E} = \bm{E}_0 - \boldsymbol{\Lambda}_2 X \bm{E} + \mathbf{T}^{\,L}\conv (X\bm{E}),
\)
so one can \emph{move the local term} to the left and \emph{invert it pointwise}:
\[
(\Id + \boldsymbol{\Lambda}_2 X)\,\bm{E} = \bm{E}_0 + \mathbf{T}^{\,L}\conv (X\bm{E})
\quad\Longrightarrow\quad
\bm{P} = M\,\bm{E}_0 + M\,\mathbf{T}^{\,L}\conv \bm{P},
\]
with the \emph{local polarizability}
\[
M := X\,(\Id+\boldsymbol{\Lambda}_2 X)^{-1}.
\]
This produces the SCE series for the effective tensor in terms of $N$-point correlation functions of the microstructure. A similar (scalar) story holds when the randomness is purely in the zeroth--order term.

\paragraph{Why SCE breaks for first--order randomness.}
Return to \eqref{eq:E-renorm-candidate}, where
\[
\bm{P}=\bB\,u \quad\text{and}\quad q=(\diver\bB)\,u.
\]
The barriers to SCE are 
\begin{itemize}
    \item No local relation $\bm{P}=M\,\bm{E}$:
In the self--adjoint cases, $\bm{P}$ depends on $\bm{E}$ by \emph{multiplication}. Here, $\bm{P}$ depends on the \emph{potential} $u$, not on its gradient $\bm{E}$. There is no local matrix $M(\bx)$ so that $\bB(\bx)\,u(\bx)=M(\bx)\,\bm{E}(\bx)$ for all admissible $u$.

\begin{lemma}[No local polarizability]
Suppose there exists a (measurable) matrix field $M(\bx)$ with $\bm{P}(\bx)=M(\bx)\,\bm{E}(\bx)$ for all solutions $u$. Then $\bB\equiv \bm{0}$.
\end{lemma}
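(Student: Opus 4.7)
The plan is to rule out $\mathbf{B}$ pointwise by exhibiting, at each $\mathbf{x}_0$, an admissible solution whose gradient vanishes at $\mathbf{x}_0$ but whose value does not. Since the hypothesized identity $\mathbf{B}(\mathbf{x})\,u(\mathbf{x}) = M(\mathbf{x})\,\nabla u(\mathbf{x})$ is assumed to hold for \emph{all} admissible $u$, applying it to such a test solution forces $\mathbf{B}(\mathbf{x}_0) = 0$. The argument is local and uses nothing about the convolution structure in \eqref{eq:LSE}; the obstruction to SCE is already visible at the level of what a ``local polarizability'' would have to do.

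\textbf{Steps.} First I would fix an arbitrary $\mathbf{x}_0 \in \mathbb{R}^d$. Second, I would produce a single admissible solution $u_*$ with the two independent conditions $u_*(\mathbf{x}_0) = 1$ and $\nabla u_*(\mathbf{x}_0) = \mathbf{0}$. The cleanest construction is to pick any smooth compactly supported $w$ with $w \equiv 1$ on a neighborhood of $\mathbf{x}_0$, set $s := \mathcal{L}w$, and take $u_* = w$ as the solution corresponding to source $s$; this is admissible as soon as the solution class allows arbitrary source terms in \eqref{eq:PDE}. If instead only the homogeneous, uniform-gradient family $\{u_{\mathbf{E}_0}\}$ is allowed, I would combine two features of this family: (i) for $c_0 = 0$ constants are homogeneous solutions, directly giving $u_* \equiv 1$ with $\nabla u_* \equiv 0$; (ii) for $c_0 > 0$, linearity plus the nontrivial dependence of $u_{\mathbf{E}_0}(\mathbf{x}_0)$ and $\nabla u_{\mathbf{E}_0}(\mathbf{x}_0)$ on $\mathbf{E}_0$ allows me to form a combination whose gradient cancels at $\mathbf{x}_0$ while its value does not. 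Third, evaluating the hypothesis at $u_*$ gives $\mathbf{B}(\mathbf{x}_0)\cdot 1 = M(\mathbf{x}_0)\cdot\mathbf{0} = \mathbf{0}$, so $\mathbf{B}(\mathbf{x}_0) = \mathbf{0}$. Since $\mathbf{x}_0$ was arbitrary, $\mathbf{B}\equiv \mathbf{0}$.

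\textbf{Main obstacle.} The only delicate point is making precise what ``for all admissible $u$'' means and checking that the class is rich enough to contain a $u_*$ decoupling value from gradient at a prescribed point. For the Lippmann--Schwinger framework of \eqref{eq:LSu}--\eqref{eq:LSE}, where sources and/or imposed macroscopic gradients are freely chosen, this is straightforward, because the linear map $u \mapsto (u(\mathbf{x}_0), \nabla u(\mathbf{x}_0)) \in \mathbb{R}\times\mathbb{R}^d$ is surjective on the admissible class. If one insisted on a minimal class (say, a single fixed $\mathbf{E}_0$ with $s=0$), the lemma would need a separate, weaker statement; this is a modeling choice I would flag but not belabor. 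Conceptually, the obstacle the lemma is pointing at is structural: $\mathbf{P} = \mathbf{B}u$ depends on the scalar potential, whereas a local constitutive law $\mathbf{P} = M\mathbf{E}$ depends only on the gradient, and no finite-rank $M(\mathbf{x})$ can bridge those two kinds of information.
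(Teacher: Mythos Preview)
Your proposal is correct and follows essentially the same approach as the paper: fix $\mathbf{x}_0$, exhibit a test solution that is locally constant there (the paper simply says ``a smooth bump function flattened near $\mathbf{x}_0$''; you make this precise by setting $s:=\mathcal{L}w$ for such a $w$), and then evaluate the hypothesized relation to force $\mathbf{B}(\mathbf{x}_0)=\mathbf{0}$. Your additional discussion of what ``admissible'' should mean and the surjectivity of $u\mapsto(u(\mathbf{x}_0),\nabla u(\mathbf{x}_0))$ is more careful than the paper's own proof, which takes the richness of the solution class for granted.
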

\begin{proof}
Fix $\bx_0$ and choose a test solution $u$ that is locally constant near $\bx_0$ (e.g., a smooth bump function flattened near $\bx_0$). Then $\bm{E}(\bx_0)=\bm{0}$ but $u(\bx_0)$ can be nonzero, hence $\bm{P}(\bx_0)=\bB(\bx_0)\,u(\bx_0)$. The assumed relation gives $\bm{P}(\bx_0)=M(\bx_0)\bm{E}(\bx_0)=\bm{0}$, forcing $\bB(\bx_0)=\bm{0}$. Since $\bx_0$ is arbitrary, $\bB\equiv\bm{0}$.
\end{proof}

\item The $q$--term has no $\delta$ to renormalize:
The last term in \eqref{eq:E-renorm-candidate} involves the kernel $\grad G_0$, which has no $\delta$ part \eqref{eq:grad-no-delta}. Thus, unlike the $\mathbf{K}_0$ term, there is no local singularity to ``absorb'' into a pointwise transform. Any attempt to eliminate $q$ would be nonlocal (via $\mathcal{L}_0^{-1}$), defeating the SCE philosophy.

\item Interface distributions pollute the microstructure coefficients:
In many microstructure models, properties are piecewise constant by phase. If $\bB(\bx)=\sum_{p=1}^M \mathcal{I}_p(\bx)\,\bm{b}_p$ with phase indicators $\mathcal{I}_p\in\{0,1\}$ and constant vectors $\bm{b}_p$, then
\[
\diver \bB = \sum_{p=1}^M \bm{b}_p\!\cdot\!\grad \mathcal{I}_p.
\]
Distributionally, $\grad \mathcal{I}_p$ is a \emph{surface measure} concentrated on the phase boundaries:
\[
\grad \mathcal{I}_p = -\,\bm{n}\,\delta_{\partial\Omega_p},
\]
where $\bm{n}$ is the unit normal and $\delta_{\partial\Omega_p}$ is a Dirac mass on the interface. Consequently $q=(\diver\bB)\,u$ is \emph{supported on surfaces}, and ensemble averages of the form $\langle (\grad G_0)\conv q\rangle$ depend on \emph{surface--surface} statistics (geometry of interfaces), not on the usual \emph{volume} $N$-point correlation functions $\{S_n\}$ that power SCE. This is a structural mismatch.
\item No symmetric energy (variational) structure:
Self--adjoint randomness leads to a positive energy form
\(
\int (\grad u)\!\cdot\!\bA(\bx)\,\grad u + \int c(\bx)\,u^2,
\)
from which bounds and renormalized series follow. The drift contribution satisfies
\[
\int \phi\,(\bB\!\cdot\!\grad u) 
= - \int u\,\big(\bB\!\cdot\!\grad \phi + (\diver\bB)\,\phi\big),
\]
which is not sign--definite and provides no coercive variational principle. This removes the key positivity used in SCE resummations.
\end{itemize}

% \section{What remains possible: a Born (weak--contrast) series}
% From \eqref{eq:LS0},
% \[
% u = (I + G_0\,\bB\!\cdot\!\grad)^{-1}\,u_0
% = \sum_{n=0}^{\infty} (-1)^n (G_0\,\bB\!\cdot\!\grad)^n\,u_0,
% \]
% a Neumann (Born) series in the non--self--adjoint operator $G_0\,\bB\!\cdot\!\grad$. This expansion has no local renormalization and typically requires a small nondimensional drift (e.g., $\|\bB\|\,\ell/\lambda_0\ll 1$, where $\ell$ is a correlation length and $\lambda_0$ a baseline diffusion scale). It does not yield the SCE--style series in powers of a local contrast parameter with coefficients built purely from volume $N$-point correlations.

\paragraph{A narrow exception: potential drift can be removed.}
There is one structurally important exception:
\begin{theorem}[Self--adjointization by similarity transform]\label{thm:selfadj}
If $\bB(\bx)=\bA_0\,\grad \psi(\bx)$ for some scalar $\psi$, and $\bA_0$ is constant SPD, then the change of variables $u = e^{\psi/2} v$ converts $\mathcal{L}$ into a self--adjoint operator with a \emph{random zeroth--order potential}:
\begin{equation}\label{eq:similarity}
\mathcal{L}u = e^{\psi/2}\left[-\diver(\bA_0 \grad v) 
+ \Big(c_0 \;+\; \tfrac{1}{4}\,\grad\psi\!\cdot\!\bA_0\grad\psi 
\;-\; \tfrac{1}{2}\,\diver(\bA_0 \grad\psi)\Big)\,v \right].
\end{equation}
\end{theorem}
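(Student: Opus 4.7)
The plan is to verify Theorem~\ref{thm:selfadj} by direct substitution: insert $u = e^{\psi/2} v$ into each of the three terms of $\mathcal{L}$ in \eqref{eq:PDE}, expand via the product rule, and observe that the first-order (drift) contribution in $v$ exactly cancels the cross term generated by conjugating the principal part. Once this cancellation is exhibited, the remaining operator on $v$ is manifestly self-adjoint with only a random zeroth-order potential, which is precisely what the theorem claims.

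First I would compute $\grad u = e^{\psi/2}\bigl(\grad v + \tfrac{1}{2} v\, \grad\psi\bigr)$, apply $\bA_0$, and take $-\diver$, repeatedly using the symmetry of $\bA_0$ to identify $\grad\psi\!\cdot\!\bA_0 \grad v$ with $\grad v\!\cdot\!\bA_0\grad\psi$. This yields
\begin{equation*}
-\diver(\bA_0 \grad u) = e^{\psi/2}\!\left[-\diver(\bA_0 \grad v) \;-\; \grad\psi\!\cdot\!\bA_0 \grad v \;-\; \tfrac{1}{4} v\,\grad\psi\!\cdot\!\bA_0 \grad\psi \;-\; \tfrac{1}{2} v\, \diver(\bA_0 \grad\psi)\right].
\end{equation*}
Next, using $\bB = \bA_0 \grad\psi$ and the symmetry of $\bA_0$ once more, I would compute
\begin{equation*}
\bB\!\cdot\!\grad u = e^{\psi/2}\!\left[\grad\psi\!\cdot\!\bA_0 \grad v \;+\; \tfrac{1}{2} v\, \grad\psi\!\cdot\!\bA_0 \grad\psi\right],
\end{equation*}
and trivially $c_0 u = e^{\psi/2} c_0 v$. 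Summing the three pieces, the $\grad v$-linear term in $v$ is $-\grad\psi\!\cdot\!\bA_0 \grad v + \grad\psi\!\cdot\!\bA_0 \grad v = 0$, the two $\grad\psi\!\cdot\!\bA_0 \grad\psi$ contributions combine as $-\tfrac{1}{4} + \tfrac{1}{2} = \tfrac{1}{4}$, and the $\diver(\bA_0 \grad\psi)$ piece survives with coefficient $-\tfrac{1}{2}$; factoring $e^{\psi/2}$ out yields exactly \eqref{eq:similarity}.

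Finally I would observe that the bracketed operator on $v$ has the form $-\diver(\bA_0 \grad v) + \tilde c(\bx)\,v$ with constant SPD $\bA_0$ and a real-valued multiplier $\tilde c$, hence is formally self-adjoint in $L^2$ under the standard boundary conditions used for $\mathcal{L}_0$; the randomness has been relocated entirely into the zeroth-order potential $\tilde c$, to which the SCE framework developed in the main text applies verbatim. The computation presents no substantive obstacle; the only point worth emphasizing is that the cancellation of the drift terms uses \emph{both} the potential hypothesis $\bB = \bA_0 \grad\psi$ and the symmetry of $\bA_0$. If either assumption fails, a nontrivial first-order contribution in $v$ persists and no scalar similarity $u = e^{\phi} v$ can eliminate it, consistent with the no-go arguments given in the paragraphs preceding the theorem.
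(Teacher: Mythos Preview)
Your proof is correct and follows essentially the same direct-computation route as the paper's proof sketch: compute $\grad u$, expand $-\diver(\bA_0\grad u)$ and $\bB\!\cdot\!\grad u$ via the product rule, and observe the cancellation of the first-order cross terms using the symmetry of $\bA_0$. Your write-up is in fact slightly more explicit about the coefficient bookkeeping ($-\tfrac14+\tfrac12=\tfrac14$) and about the role of each hypothesis in the cancellation, but there is no substantive difference in method.
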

\begin{proof}[Proof sketch]
Compute $\grad u = e^{\psi/2}\big(\grad v + \tfrac{1}{2} v\,\grad\psi\big)$ and use the product rule with $\bA_0$ constant:
\begin{align*}
\diver(\bA_0 \grad u)
&= \diver\!\left(e^{\psi/2} \bA_0\big(\grad v + \tfrac{1}{2} v\,\grad\psi\big)\right) \\
&= e^{\psi/2}\Big[\diver(\bA_0\grad v) + (\grad v)\!\cdot\!\bA_0\grad\psi 
+ \tfrac{1}{2} v\,\diver(\bA_0\grad\psi) + \tfrac{1}{4} v\,\grad\psi\!\cdot\!\bA_0\grad\psi \Big].
\end{align*}
Also $\bB\!\cdot\!\grad u = e^{\psi/2}\big((\bA_0\grad\psi)\!\cdot\!\grad v + \tfrac{1}{2} v\,\grad\psi\!\cdot\!\bA_0\grad\psi\big)$. Substituting in $\mathcal{L}u=-\diver(\bA_0\grad u)+\bB\!\cdot\!\grad u + c_0 u$ cancels the $(\grad v)\!\cdot\!\bA_0\grad\psi$ terms and yields \eqref{eq:similarity}.
\end{proof}

\begin{remark}
In one spatial dimension ($d=1$), every scalar drift is a gradient; therefore the transform always exists and reduces the problem to one with zeroth--order randomness, where SCE applies. In $d\ge 2$, a generic stationary random drift is \emph{not} a gradient (\,$\curl \bB\neq \bm{0}$\,), so the transform is unavailable.
\end{remark}
% \subsection{Wave Propagation Sensitivity: Supplementary Figures}
% Fig. \ref{fig: wave_time_domain} presents sensitivity analysis of the real and imaginary parts of the directional effective dielectric constant with respect to 2 point correlation function for both the SCE and NCE approaches across different orientations $\theta$. The color scale (yellow to dark blue) indicates regions of strong positive influence versus negligible influence.
\begin{figure}
  \centering
  \includegraphics[width=\columnwidth]{./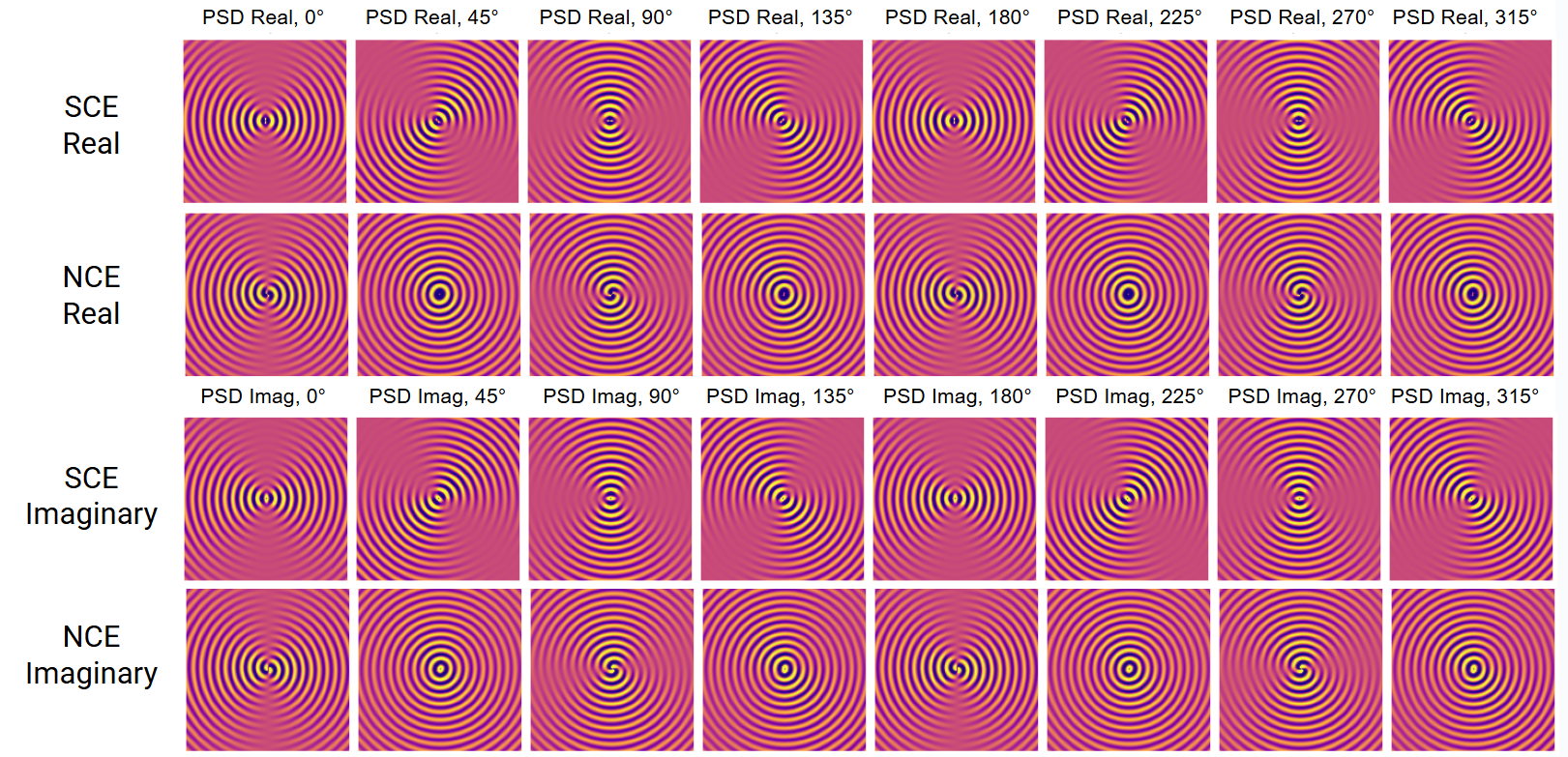}
  \caption{Sensitivity analysis of the real and imaginary parts of the directional effective dielectric constant with respect to 2PCF is shown for both the SCE and NCE approaches across different orientation angles. The color scale (yellow to dark blue) indicates regions of strong positive influence versus negligible influence.}
  \label{fig: wave_time_domain}
\end{figure}

\subsection{NCE–LLM Integration: Extracting Physical Insights}
\label{sec:LLM}
In this section, we present a blinded study with the GPT-5 Thinking model: memory and prior-conversation referencing were disabled, the model was shown only the sensitivity maps (without disclosing the underlying physical property), and it was tasked with proposing designs that meet a specified engineering objective under a fixed volume fraction. Screenshots of the conversation are provided below.

\begin{figure}
  \centering
  \includegraphics[width=\columnwidth]{./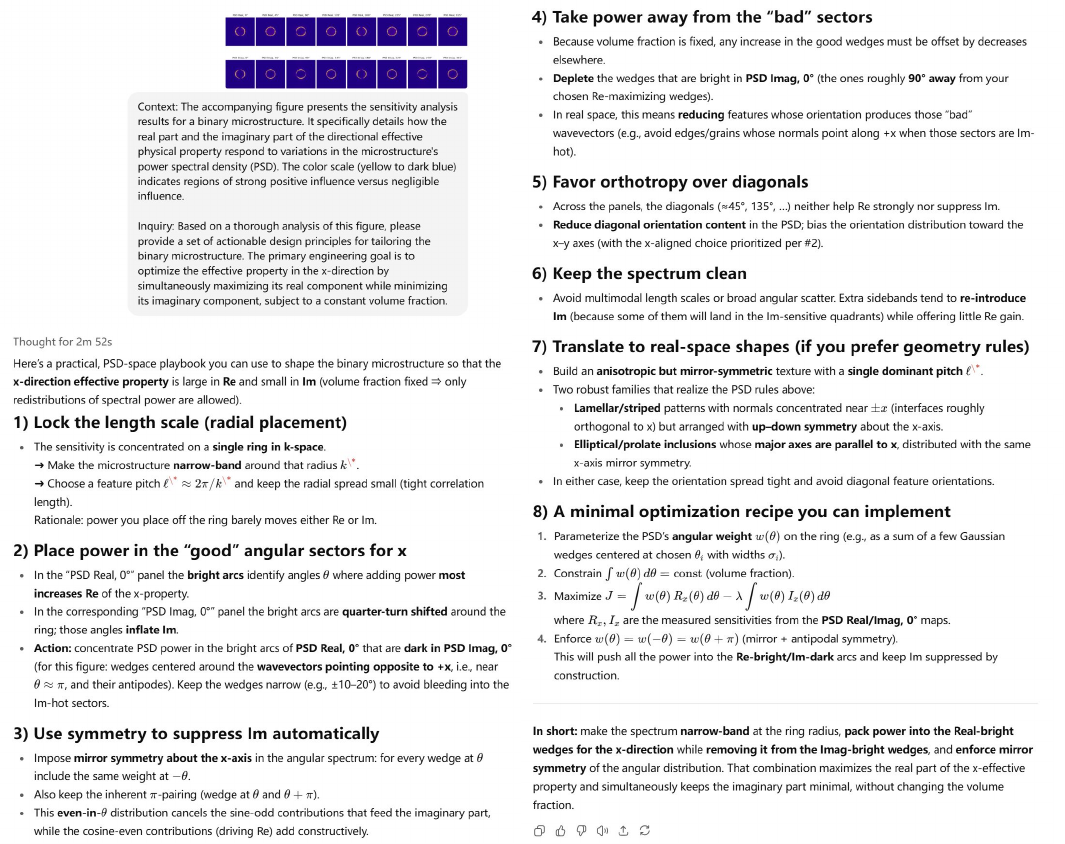}
  \caption{The primary engineering goal is to optimize the effective property in the x-direction by simultaneously maximizing its real component while minimizing its imaginary component, subject to a constant volume fraction.}
\end{figure}

\begin{figure}
  \centering
  \includegraphics[width=\columnwidth]{./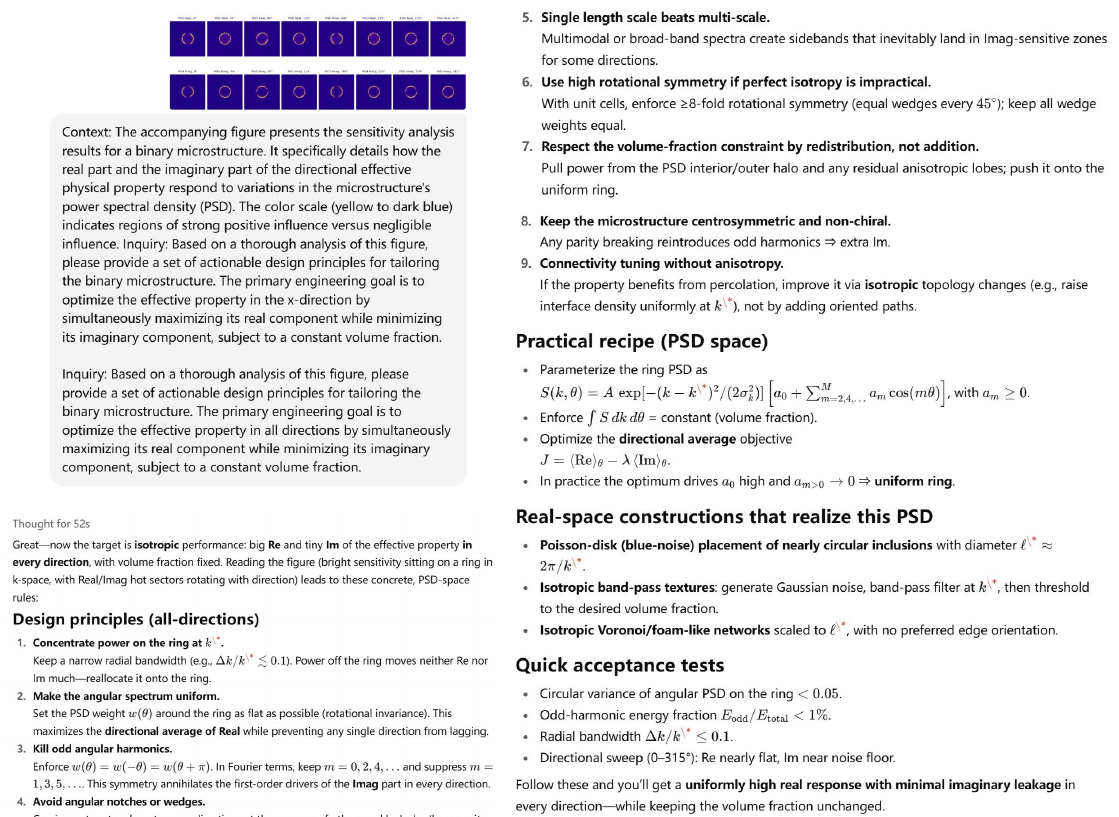}
  \caption{The primary engineering goal is to optimize the effective property in all directions by simultaneously maximizing its real component while minimizing its imaginary component, subject to a constant volume fraction.}
\end{figure}

\begin{figure}
  \centering
  \includegraphics[width=\columnwidth]{./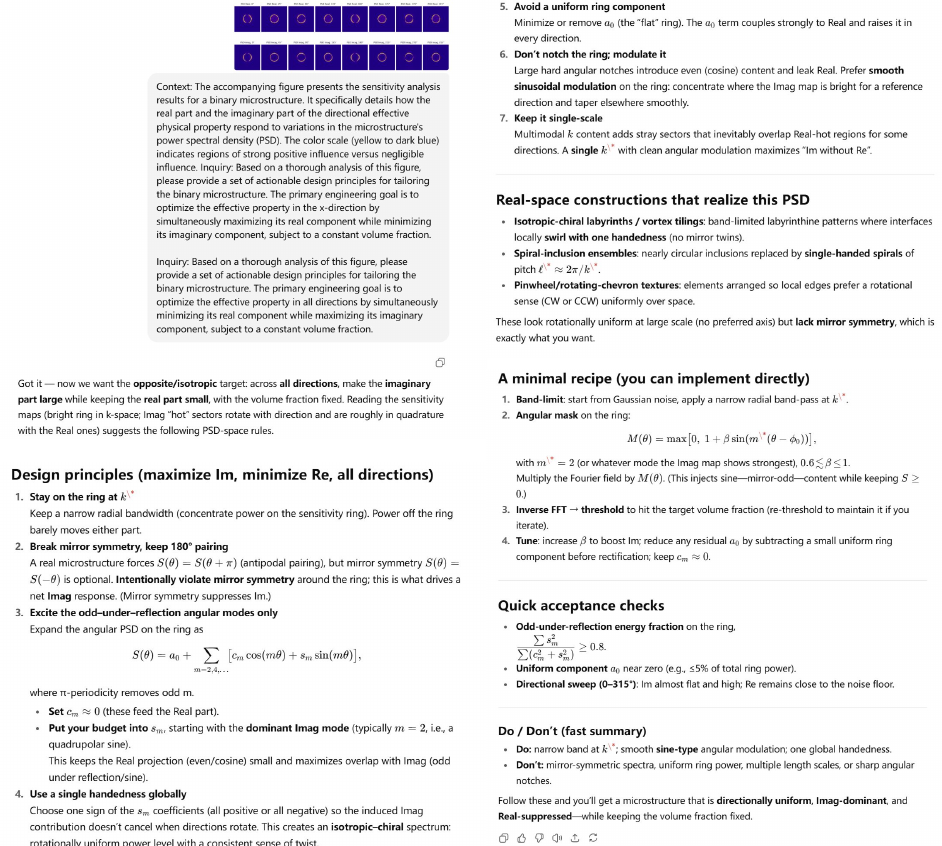}
  \caption{The primary engineering goal is to optimize the effective property in all directions by simultaneously minimizing its real component while maximizing its imaginary component, subject to a constant volume fraction.}
\end{figure}

\end{document}